\PassOptionsToPackage{colorlinks=true,citecolor=ForestGreen,linkcolor=Plum,urlcolor=NavyBlue}{hyperref}
\documentclass[cleveref]{jmlr}

\usepackage{jmlrutils}

\usepackage{amsmath,amssymb,mathtools}
\usepackage{bbm}
\usepackage{physics}
\usepackage{framed}
\usepackage{graphicx}
\usepackage{multirow}
\usepackage{booktabs}
\usepackage{blkarray}
\usepackage{tcolorbox}
\usepackage[normalem]{ulem}
\usepackage[dvipsnames,table]{xcolor}

\usepackage{microtype}
\hypersetup{
  colorlinks=true,
  linkcolor=ForestGreen,
  citecolor=ForestGreen,
  urlcolor=ForestGreen,
}

\definecolor{easygreen}{RGB}{210,245,210}
\definecolor{hardred}{RGB}{255,220,220}
\definecolor{impossiblegray}{RGB}{235,235,235}

\usepackage{comment}
\usepackage{thmtools}
\usepackage{thm-restate}
\makeatletter
\def\thm@notefont{\normalfont} 
\def\thm@notebraces#1{ (#1)}   
\makeatother

\usepackage{physics}

\newcommand{\R}{\mathbb{R}}

\newcommand{\E}{\mathbb{E}}

\newcommand{\cP}{\mathcal{P}}

\renewcommand{\P}{\mathbb{P}}

\newcommand{\eps}{\varepsilon}

\newtheorem{assumption}{Assumption}

\renewcommand{\epsilon}{\varepsilon}

\numberwithin{equation}{section}

\newcommand{\argmin}{\mathop{\mathrm{argmin}}}

\newcommand{\EXP}{\mathbb{E}}

\newcommand{\defeq}{\stackrel{\mathrm{def.}}{=}}
\usepackage[normalem]{ulem}  
\usepackage{booktabs}

\DeclareMathOperator{\Var}{Var}

\newcommand{\themis}[1]{\textcolor{orange}{\textbf{Themis: } #1}}

\title[Mean Testing under Truncation]{Mean Testing under Truncation beyond Gaussian}
\usepackage{times}

\numberwithin{equation}{section}

\newcounter{mylineno}



\title{Mean Testing under Truncation beyond Gaussian}
\author{%
    \Name{Yuhao Wang} \Email{wanyuhao@amazon.co.jp}\\
    \addr Amazon  \\
    \Name{Roberto Imbuzeiro Oliveira } \Email{rimfo@impa.br}\\
    \addr IMPA, Rio de Janeiro, RJ, Brazil \\
    \Name{Themis Gouleakis} \Email{themis.gouleakis@ntu.edu.sg}\\
    \addr Nanyang Technological University
}

\begin{document}

\maketitle

\makeatletter
\let\@oddhead\@empty
\let\@evenhead\@empty
\thispagestyle{empty}
\pagestyle{plain}
\makeatother

\begin{abstract}
We characterize the fundamental limits of high-dimensional mean testing under arbitrary truncation, where samples are drawn from the conditional distribution $P(\cdot \mid S)$ for an unknown truncation set $S$ that may hide up to an $\eps$-fraction of the probability mass. 
For distributions with $p$-th directional moments of magnitude at most $\nu_{P,p}$,
truncation induces a bias of order $\mathcal{O}(\nu_{P,p}\eps^{1-1/p})$. This bias creates a sharp information-theoretic detectability floor: when the signal $\alpha$ falls below this threshold, the null and alternative hypotheses are indistinguishable even with infinite data.
Above this floor, we prove that a simple second-order test achieving near-optimal sample complexity
\[
n = \mathcal{O}\!\left(\frac{\|\Sigma_P\|}{(\alpha-4\nu_{P,p}\eps^{1-1/p})^2}\sqrt{d}\right).
\]
We further identify a structural escape from this finite-moment bias barrier.
Under a \emph{directional median regularity} assumption, truncation bias improves to linear order $\mathcal{O}(\eps)$.
This reveals an intermediate regime in which estimation requires $\Theta(d)$
samples for uniform recovery, while testing recovers the classical
$\Theta(\sqrt d)$ rate once truncation bias is eliminated.
Together, our results provide a unified framework for mean testing under truncation, connecting finite-moment, sub-Gaussian, and median-regular structural regimes.
\end{abstract}

\begin{keywords}%
  Mean Testing, Arbitrary Truncation, Heavy-Tailed Distributions, Information-Theoretical Limits, Robust Statistics
\end{keywords}

\section{Introduction}
\label{sec:introduction}

Robust mean estimation has been extensively studied in statistics and machine learning, with a rich literature on contamination and heavy-tailed noise \citep{diakonikolas2016robust, lugosi2019mean,  diakonikolas2019recent, diakonikolas2023algorithmic,gupta2024beyond}. By contrast, the problem of \emph{mean testing} under arbitrary truncation, where samples are observed only if they fall in an unknown set, remains largely unexplored \citep{canonnegaussian}. 

Truncation arises naturally in settings such as privacy-preserving data collection, controlled sensing, and heavy-tailed environments \citep{dwork2014algorithmic, kamath2020private,zampetakis2025private}, where observations outside an unknown acceptance region are systematically discarded.
Whereas classical contamination models were primarily designed with outliers in mind \citep{huber2011robust, hampel1974influence}, truncation induces structured
missingness that can remove $\eps$-fraction of the probability mass anywhere in the
distribution. 
This difference has sharp consequences for hypothesis testing, which exhibits detectability thresholds that separate
irreducible truncation bias from statistical fluctuation.
This raises a fundamental question: \emph{to what extent can truncation obscure the mean, and when is reliable testing possible?}

We study the hypothesis testing problem
\begin{equation}
    H_0: \mu_P = 0 \quad \text{versus} \quad H_1: \|\mu_P\| \ge \alpha,
\end{equation}
given i.i.d.~samples from the truncated law $P(\cdot|S)$, where the truncation set $S \subseteq \mathbb{R}^d$ is unknown and may be chosen adversarially, subject only to $P(\cdot|S) \geq 1-\eps$. The main challenge is that truncation can induce a systematic bias in the observed mean, potentially masking the separation between null and alternative hypotheses. 
Quantifying this bias is therefore essential for characterizing the
detectability limits of the testing problem.

\begin{table}[t]
\centering
\caption{Phase diagram for high-dimensional mean testing under arbitrary truncation.
For each row, regimes are ordered by increasing signal strength $\alpha$.
Under finite $p$-moment assumptions, truncation induces a detectability floor
$b_{\mathrm{mom}}(\varepsilon)=\Theta(\nu_{P,p}\varepsilon^{1-1/p})$, below which
testing is information-theoretically impossible.
Under structural regularity, this floor improves to
$b_{\mathrm{reg}}(\varepsilon)=\tilde O(\varepsilon)$, revealing an intermediate
\emph{learning-hard} regime in which testing requires $\Theta(d)$ samples before
recovering the optimal $\Theta(\sqrt d)$ rate.
\emph{Gray} indicates impossibility, \emph{red} indicates $\Theta(d)$ sample
complexity, and \emph{green} indicates $\Theta(\sqrt d)$ complexity (all up to
logarithmic factors).}
\label{tab:comparison}

\renewcommand{\arraystretch}{1.15}
\small
\begin{tabular}{l
>{\centering\arraybackslash}p{0.16\columnwidth}
>{\centering\arraybackslash}p{0.17\columnwidth}
>{\centering\arraybackslash}p{0.16\columnwidth}
>{\centering\arraybackslash}p{0.16\columnwidth}}
\toprule
 & \parbox[c]{0.16\columnwidth}{\centering
   Impossible\\[-0.2ex]
   {\footnotesize $\alpha \le b_{\mathrm{reg}}(\varepsilon)$}
 }
 & \parbox[c]{0.18\columnwidth}{\centering
   Learning-hard\\[-0.2ex]
   {\footnotesize $b_{\mathrm{reg}}(\varepsilon) < \alpha \le b_{\mathrm{mom}}(\varepsilon)$}
 }
 & \parbox[c]{0.18\columnwidth}{\centering
   Near floor\\[-0.2ex]
   {\footnotesize $\alpha \gtrsim b_{\mathrm{mom}}(\varepsilon)$}
 }
 & \parbox[c]{0.18\columnwidth}{\centering
   Strong signal\\[-0.2ex]
   {\footnotesize $\alpha \gg b_{\mathrm{mom}}(\varepsilon)$}
 } \\
\midrule

\parbox[c]{0.20\columnwidth}{\raggedright
Moment-only\\[-0.2ex]
{\footnotesize (finite $p$-moments)}
}
& \cellcolor{impossiblegray}$\infty$
& \cellcolor{impossiblegray}$\infty$
& \cellcolor{easygreen}$\Theta(\sqrt d)$
& \cellcolor{easygreen}$\Theta(\sqrt d)$ \\

Structural regularity
& \cellcolor{impossiblegray}$\infty$
& \cellcolor{hardred}$\Theta(d)$
& \cellcolor{easygreen}$\Theta(\sqrt d)$
& \cellcolor{easygreen}$\Theta(\sqrt d)$ \\
\bottomrule
\end{tabular}
\end{table}

Our first contribution is a sharp characterization of truncation-induced bias under finite moment assumptions. For distributions with directional $p$-th moments upper bounded by $\nu_{P,p}$, we show in Lemma~\ref{lem:trunc-bias} that truncation can shift the mean by at most $\gamma=2\nu_{P,p}\eps^{1-1/p}$, and that this dependence is information-theoretically unavoidable. 
Building on this result, we design a simple test that achieves the optimal $\sqrt{d}$ dependence with sample complexity $ n = \mathcal{O}\!\left(\frac{\|\Sigma_P\|}{(\alpha - 2\gamma)^2}\sqrt{d}\right), $
whenever $\alpha > 2\gamma$. Conversely, we prove a matching impossibility result: when the signal lies below this bias floor, the null and alternative hypotheses become indistinguishable, even with infinitely many samples.

\paragraph{Informal Theorem (Finite-Moment Barrier).}
\textit{Consider testing $H_0:\mu_P=0$ versus $H_1:\|\mu_P\|\ge\alpha$ from
$\eps$-truncated samples of a distribution with finite directional
$p$-th moments upper bounded by $\nu_{P,p}$.
There exists a critical scale
$\alpha^\star =\Theta(\nu_{P,p}\eps^{1-1/p}$) such that:
(i) if $\alpha \ll \alpha^\star$, the hypotheses are information-theoretically
indistinguishable, regardless of sample size;
(ii) if $\alpha \gg \alpha^\star$, reliable testing is possible with optimal
$\sqrt{d}$ dependence in sample complexity (see Theorem \ref{thm:impossibility} in section \ref{subsec:impossibility}). }
\smallskip


Beyond this moment-limited regime, we identify a structural condition that allows one to escape the truncation bias barrier. Under an additional \emph{directional median regularity} assumption --- requiring each one-dimensional projection to be median-centered with non-vanishing local density --- the truncation bias improves to linear order $\mathcal{O}(\eps)$.  
This improvement is not a consequence of stronger tail concentration, but of
local identifiability of the center in every direction, which limits how much
probability mass can be removed near the center without altering the observed
distribution.
In particular, Gaussian distribution satisfy this condition, and the near-linear truncation bias established in \cite{canonnegaussian} arises as a special case of our analysis.
As a result, stable center estimation and reliable testing become possible even
for heavy-tailed distributions.

\paragraph{Informal Theorem (Structural Escape).}
\textit{If the distribution $P$ satisfies directional median regularity
(Assumption~\ref{assump:median-regularity}), then truncation induces at most a linear
$\mathcal{O}(\eps)$ bias in the center.
Consequently, reliable mean testing is possible using $\Theta(d)$ samples whenever
$\alpha \gtrsim \eps$, matching the bias scaling of the sub-Gaussian
regime even for heavy-tailed distributions (see Theorem \ref{thm:median_recovery} in section \ref{sec:median_regularity}).}
\smallskip

Together, these results provide a unified framework for mean testing under truncation, interpolating between the sub-Gaussian regime, the finite-moment impossibility barrier, and the median-regular setting where robustness is restored. As summarized in Table~\ref{tab:comparison}, the $\sqrt{d}$ dependence characteristic of high-dimensional testing is preserved across all feasible regimes, but the robustness (bias scale) varies fundamentally with the structural assumptions.

Our work positions truncation as a distinct and fundamental obstacle in high-dimensional testing, whose difficulty is governed by truncation-induced bias rather than statistical variance. 
By precisely characterizing detectability limits under finite-moment assumptions and identifying structural regularity conditions that eliminate this bias, we 
bridge classical robust statistics with modern distribution property testing, and provide principled testing procedures for truncated data.

\section{Related Work}
\label{sec:related}

The study of statistical inference under truncation has a long history, often
distinguishing between settings where the truncation mechanism is known and those
where it is unknown or adversarial. Our work bridges classical learning from
truncated data and modern high-dimensional hypothesis testing, focusing on
detectability limits rather than estimation accuracy.

The landscape of minimax signal detection and property testing is significantly shaped by several fundamental works that characterize the limits of distinguishing a signal from noise. \cite{baraud2002non} established sharp non-asymptotic minimax rates of testing for Gaussian sequence models over various constraint sets, including ellipsoids and $\ell_p$-bodies for $p \in [0, 2]$. Complementing this, \cite{ingster2012nonparametric} provides a comprehensive treatment of nonparametric goodness-of-fit testing under Gaussian models, defining the minimax separation rates that separate detectable signals from indistinguishable noise in the asymptotic regime. While these classical results define the information-theoretic limits for testing under standard Gaussian noise, they generally assume the observations follow a known distribution without missingness or adversarial intervention. In contrast, the truncation model studied here introduces a systematic bias barrier of order $\mathcal{O}(\nu_{P,p}\epsilon^{1-1/p})$ that persists even with infinite data, requiring structural regularity to overcome.

\paragraph{Mean estimation and learning from truncated samples.}
Classical work on truncated and censored samples focuses on parametric estimation
(e.g., for exponential or normal families), typically assuming a known truncation
mechanism and developing method-of-moments or maximum-likelihood procedures
\citep{deemer1955estimation,cohen1957solution,dixon1960simplified,fisher1931properties,cohen1991truncated}.
More recently, algorithmic work studied high-dimensional learning tasks from
truncated data under known truncation sets, including efficient algorithms for
truncated statistics and truncated regression
\citep{daskalakis2018efficient,daskalakis2019computationally,daskalakis2020truncated},
as well as structured truncation arising from generative models such as one-layer
ReLU networks \citep{wu2019learning}. Beyond known-truncation settings, efficient
algorithms for unknown truncation have also been developed for certain parametric
families \citep{kontonis2019efficient,DBLP:conf/focs/LeeMZ24}, and for regression /
classification under truncation mechanisms motivated by self-selection
\citep{ilyas2020theoretical,cherapanamjeri2023makes}.

\paragraph{Mean testing under truncation and detecting truncation.}
Most of the above literature focuses on learning/estimation, whereas our focus is
hypothesis testing. Closest in spirit, \cite{canonnegaussian} studies Gaussian mean
testing under \emph{arbitrary} truncation, identifying sharp limits and a
near-linear $\mathcal{O}(\eps\sqrt{\log(1/\eps)})$ truncation bias barrier in the
Gaussian/sub-Gaussian regime. Our work generalizes the testing perspective beyond
Gaussian assumptions and shows that for general distributions with finite moments,
truncation induces a polynomial bias barrier of order $\mathcal{O}(\eps^{1-1/p})$,
leading to an information-theoretic detectability floor.
Orthogonally, \cite{DeNS23,DBLP:conf/stoc/De0NS24} consider problems of
\emph{detecting} whether samples have been truncated under structural promises on
the truncation set or mechanism; these are complementary to our goal, which assumes
truncation may be present and seeks to test the underlying mean.

\paragraph{Robustness: truncation vs.\ contamination.}
Robust statistics traditionally studies contamination models (e.g., $\eps$-contamination),
where an adversary replaces a fraction of samples with arbitrary outliers
\citep{huber2011robust,diakonikolas2023algorithmic,depersin2022robust,diakonikolas2020outlier}. While truncation can be viewed
as a restricted form of adaptive (strong) contamination---since an adversary can
remove informative mass---it is fundamentally different: truncation creates
systematic missingness and can ``hollow out'' the distribution near its center.
For robust mean testing under contamination, \cite{canonne2023full} characterizes
sharp separations between oblivious and adaptive contamination, including regimes
where testing is strictly easier than learning. In contrast, under truncation we
show that testability is governed by truncation-induced bias floors and, under
structural regularity, an intermediate learning-hard regime appears before the
optimal $\Theta(\sqrt{d})$ testing rate is recovered. 
Other recent works have established powerful methods for robust mean estimation—such as the nearly linear-time algorithms by Depersin and Lecué \cite{depersin2022robust}  and the stability-based framework by Diakonikolas et al. \cite{diakonikolas2020outlier}  —in the aforementioned adversarial contamination model. 
\paragraph{Heavy-tailed mean estimation and testing.}
There is a rich literature on robust mean estimation under finite-moment or finite-covariance assumptions,
including algorithmic approaches achieving sub-Gaussian-type performance and sharp uniform guarantees
\citep{hopkins2020robust,gupta2024beyond,minsker2025uniform,hogsgaard2025uniform}.
Recent work also highlights qualitative separations between heavy-tailed estimation and adversarial contamination
\citep{cherapanamjeri2025heavy}. Our results identify a testing-specific ``moment barrier'' under truncation:
even when accurate estimation is possible (in the absence of truncation) under heavy tails, truncation induces a systematic bias of order
$\eps^{1-1/p}$ that renders mean testing information-theoretically impossible below the corresponding
detectability floor unless additional structural regularity is imposed.

\paragraph{Geometric and median-based regularity.} 
Median-based stability and geometric regularity conditions are widely used in robust statistics to overcome
moment limitations (see, e.g., surveys and references in \cite{lugosi2019mean,diakonikolas2023algorithmic}).
Our \emph{directional median regularity} assumption is motivated by this line of work; 
it imposes a local density near the center in all directions, thereby ruling out adversarial removal of probability mass near the center.
We show that such structural regularity improves the truncation-induced bias from $\mathcal{O}(\eps^{1 - 1/p})$ to $\mathcal{O}(\eps)$ and reveals an intermediate regime where learning remains hard while testing becomes tractable. 

\section{Notations and Preliminaries}
\label{sec:prelims}

This section introduces the mathematical framework and distributional quantities required to analyze mean testing under truncation.

\paragraph{Notations}
We consider families of probability distributions $\cP$ over $\R^d$. For any $P \in \cP$, $\mu_P := \E_{X \sim P}[X]$ denotes the mean vector and $\Sigma_P := \mathrm{Cov}_{X \sim P}(X)$ denotes the covariance matrix. We denote both the Euclidean norm of vectors and the spectral (operator) norm of a matrix by $\|\cdot\|$. Samples $X_1, \dots, X_n$ are assumed to be independently and identically distributed (i.i.d.). For a real parameter $p$ and threshold $\kappa$, we write $p \uparrow \kappa$
to denote the limit as $p$ increases to $\kappa$ from below.
Accordingly, statements of the form ``$\nu_{P,p}$ diverges as $p \uparrow \kappa$''
mean that $\nu_{P,p}<\infty$ for all $p<\kappa$ and $\nu_{P,p}=\infty$ for $p\ge \kappa$.

\paragraph{The Truncation Model}
Let $\eps \in [0, 1/2]$ for all $P\in \cP$. We say that samples are subject to an $\eps$-truncation if there exits a measurable set $S \in \R^d$ satisfying $P(S) \ge 1 - \eps$. 
We define the \emph{truncated law} $P_S := P(\cdot \mid S)$, representing the conditional distribution of $P$ restricted to $S$.
Throughout the paper, we observe i.i.d.~samples $X_1, \dots, X_n \sim P_S$, while the goal is to test
\[
H_0: \mu_P = 0
\qquad \text{versus} \qquad
H_1: \|\mu_P\| \ge \alpha,
\]
for some separation parameter $\alpha > 0$. 

\paragraph{Directional Moments and Tail Geometry}
Since truncation may remove mass along arbitrary direction in $\R^d$, we characterize tail behavior via worst-case one-dimensional projection.

\begin{definition}[Directional $p$-th Moment]
\label{def:pthmoment}
For $p \ge 1$, the directional $p$-th moment bound of a distribution $P$ on $\mathbb{R}^d$ is defined as:
\begin{equation}
    \nu_{P,p} := \sup_{v \in \mathbb{R}^d : \|v\|=1} \left( \E_{X \sim P} |\langle v, X - \mu_P \rangle|^p \right)^{1/p},
\end{equation}
that is, the worst-case $p$-th moment over all one-dimensional projections.
\end{definition}

The growth of $\nu_{P,p}$ as a function of $p$ characterizes the tail geometry of $P$. 
For sub-Gaussian distributions with parameter $\sigma$, there exists a constant $C>0$ such that $\nu_{P,p} \le C\sigma\sqrt{p}$ for all $p \ge 1$. 
In contrast, for heavy-tailed distributions with polynomially decaying tails, directional moments typically exist only up to some finite order $\kappa$: the quantity $\nu_{P,p}$ remains bounded for $p<\kappa$ but diverges as $p \uparrow \kappa$. 

Appendix~\ref{appendix:section3} discusses the behavior of $\nu_{P,p}$ for sub-Gaussian and heavy-tailed distributions.  
\paragraph{Fundamental Limits of Truncation Bias}
The following lemma establishes the technical foundation for our testing results by quantifying how much an unknown truncation set $S$ can shift the population parameters. 

\begin{restatable}[Truncated vs.\ Non-truncated Parameters]{lemma}{TruncationBiasLemma}\label{lem:trunc-bias}
Let $P$ be a distribution on $\R^d$ and $S$ be a measurable set with $P(S) \ge 1 - \eps$ for $\eps \in [0, 1/2]$. Then for any $p \ge 1$:
\begin{enumerate}
    \item \textbf{Mean Shift:} $\|\mu_P - \mu_{P_S}\| \le 2\nu_{P,p}\eps^{1-1/p}$.
    \item \textbf{Covariance Bound:} $\|\Sigma_{P_S}\| \le (1-\eps)^{-2}\|\Sigma_P\| \le 4\|\Sigma_P\|$.
\end{enumerate}
\end{restatable}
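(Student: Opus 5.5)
Both parts reduce to one-dimensional projections. Throughout, write $Y = X - \mu_P$ for $X \sim P$ and $q = P(S) \ge 1-\eps$.

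\textbf{Mean shift.} Since $\E[Y] = 0$, for any unit vector $v$ we have
\[
\langle v, \mu_{P_S} - \mu_P\rangle = \frac{\E[\langle v,Y\rangle \mathbbm{1}_S]}{q} = -\frac{\E[\langle v,Y\rangle \mathbbm{1}_{S^c}]}{q}.
\]
Apply H\"older with exponents $p$ and $p/(p-1)$ to the last expression. This gives
\[
|\E[\langle v,Y\rangle \mathbbm{1}_{S^c}]| \le \nu_{P,p}\, P(S^c)^{1-1/p} \le \nu_{P,p}\,\eps^{1-1/p}.
\]
Dividing by $q \ge 1-\eps \ge 1/2$ yields the factor $2$. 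Taking the supremum over $v$ gives the bound on $\|\mu_P - \mu_{P_S}\|$. For $p=1$ the bound reads $2\nu_{P,1}$, and the same argument covers it trivially. Using the complement rather than $S$ itself is the key choice: it is what produces the small factor $\eps^{1-1/p}$ instead of $q^{1-1/p}$.

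\textbf{Covariance.} Take a unit vector $v$ and use the fact that the variance is the minimal second moment about any point. Centering at $\mu_P$ instead of $\mu_{P_S}$ gives
\[
v^\top \Sigma_{P_S} v \le \E_{P_S}[\langle v, Y\rangle^2] = \frac{\E[\langle v,Y\rangle^2 \mathbbm{1}_S]}{q} \le \frac{v^\top\Sigma_P v}{q}.
\]
Hence $\|\Sigma_{P_S}\| \le (1-\eps)^{-1}\|\Sigma_P\|$. Since $(1-\eps)^{-1} \le (1-\eps)^{-2} \le 4$ for $\eps \le 1/2$, this implies the stated, weaker bound.

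\textbf{Main obstacle.} There is no substantive obstacle. The only care needed is to route the mean-shift computation through $S^c$ and to use the minimal-second-moment trick, so that no cross term involving the mean shift appears in the covariance bound. Measurability and integrability hold whenever $\nu_{P,p}$ and $\Sigma_P$ are finite; if either is infinite, the corresponding bound is vacuous.
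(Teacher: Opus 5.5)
Your proposal is correct. The mean-shift part is the paper's argument: rewrite the directional shift through $S^c$ using $\E\langle v, X-\mu_P\rangle = 0$, apply H\"older with exponents $p$ and $p/(p-1)$, and divide by $P(S)\ge 1/2$.

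The covariance part takes a different route from the paper. The paper symmetrizes, writing $\Sigma_P=\frac12\E[(X-X')(X-X')^\top]$ for independent $X,X'\sim P$. It then drops the positive semidefinite contribution from outside $S\times S$. This gives $\Sigma_P\succeq P(S)^2\,\Sigma_{P_S}$, hence the factor $(1-\eps)^{-2}$. You instead use that the mean minimizes expected squared deviation, and center the $P_S$-second moment at $\mu_P$ rather than $\mu_{P_S}$. This yields $P(S)\, v^\top\Sigma_{P_S}v\le \E[\langle v,X-\mu_P\rangle^2\mathbf{1}_S]\le v^\top\Sigma_P v$ for every $v$, that is, $\Sigma_{P_S}\preceq P(S)^{-1}\Sigma_P$. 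The law of total variance gives the same inequality.

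Both arguments avoid a cross term involving $\mu_{P_S}-\mu_P$. The paper does so through the translation invariance of $X-X'$, and you do so through the variational characterization of the variance. Yours loses one fewer factor of $P(S)$, so you get $\|\Sigma_{P_S}\|\le 2\|\Sigma_P\|$ rather than $4\|\Sigma_P\|$. This only improves universal constants downstream, for example in \eqref{eq:variance_bound_P}. The stated lemma follows from your bound since $(1-\eps)^{-1}\le(1-\eps)^{-2}$.
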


The $\mathcal{O}(\eps^{1-1/p})$ shift in the mean  represents the systematic bias that our test must overcome. 
Moreover, if $P$ is sub-Gaussian so that $\nu_{P,p} \lesssim \sigma\sqrt{p}$ for all $p$, optimizing the bound over $p$ yields the nearly linear truncation bias
$\mathcal{O}(\eps\sqrt{\log(1/\eps)})$.

\section{Mean Testing under Finite-Moment Truncation}
\label{sec:moment_testing}

This section investigates the fundamental limits of high-dimensional mean testing when samples are subject to arbitrary truncation and the underlying distribution $P$ possesses only finite directional moments. Our analysis
demonstrates that truncation induces a systematic detectability floor that is information-theoretically unavoidable under moment-only constraints.

\subsection{Effective Signal and the Detectability Floor}
\label{subsec:effective_signal}

The central challenge in mean testing under truncation is that, the learner observes samples from the conditional law $P_S$ rather than the target distribution $P$. As shown in Lemma~\ref{lem:trunc-bias}, for distributions with finite directional $p$-th moments, a worst-case truncation set $S$ can induce a systematic shift of the population mean of order $\gamma:=2\nu_{P,p}\varepsilon^{1-1/p}$.
This motivates the definition of the \emph{effective signal strength}
\begin{equation}
\label{eq:effective_signal}
    \beta := \max\left(0, \alpha - \gamma\right).
\end{equation}
which quantifies the residual separation between the alternative hypothesis under truncation and the mean $0$ case (i.e null hypothesis without truncation). 
Under the alternative hypothesis $H_1$ for distribution $P$, the observed mean of distribution $P_S$ satisfies $\|\mu_{P_S}\| \ge \beta$, whereas under the null hypothesis $H_0$, truncation alone may induce an observed mean of magnitude $\|\mu_{P_S}\|\le \gamma$. Consequently, testing performance is governed by whether the effective signal $\beta$ is above this threshold (i.e $\beta > \gamma$).

This identifies a truncation-induced detectability floor at scale $b(\eps) = \Theta(\gamma)$. Indeed, by Lemma~\ref{lem:trunc-bias}, truncation can shift the population mean by at most $b(\eps)$ in an arbitrary direction. Under the null hypothesis $H_0$, this implies $\| \mu_{P_S} \| \le b(\eps)$, while under the alternative $H_1$ we have the deterministic lower bound $\| \mu_{P_S} \| \ge \|\mu_P \| - b(\eps) \ge \beta$ by triangle inequality. 
When $\alpha \le c\,b(\eps)$ for some universal constant $c >0$, the null and alternative become information-theoretically indistinguishable even with infinite samples (Theorem~\ref{thm:impossibility}), whereas for sufficiently large effective signal $\beta$ reliable testing is possible.

By centering the analysis around the parameter $\beta-\gamma=\alpha-2\gamma$, which is required to be positive, we separate systematic bias due to truncation from the statistical fluctuations that determine variance and finite-sample behavior in the subsequent sections.

\subsection{A Natural Second-Order Test Statistic}
\label{sec:variance}
To distinguish between the null hypothesis $H_0: \|\mu_P\| = 0$ and alternative hypothesis $H_1: \|\mu_P\| \ge \alpha$, we consider the order-2 U-statistic:
\begin{equation}
\label{eq:ustat}
    F := \binom{n}{2}^{-1} \sum_{1 \le i < j \le n} \langle X_i, X_j \rangle,
\end{equation}
where $X_1, \dots, X_n \sim P_S$ are i.i.d.

This statistic admits a simple interpretation. For any distribution  $Q$ on $\mathbb{R}^d$ with finite second moments,
\begin{equation}
\label{eq:ustat_expectation}
    \mathbb{E}_{X,X' \sim Q}\!\left[\langle X, X' \rangle\right] = \|\mu_Q\|^2.
\end{equation}
Consequently, $F$ is an unbiased estimator of $\|\mu_Q\|^2$ when $X_i \sim Q$.
In contrast to the plug-in estimator $\bigl\|\frac{1}{n}\sum_{i=1}^n X_i\bigr\|^2$, whose expectation incurs an $\mathcal{O}(d/n)$ bias that dominates the signal in high dimensions, the $U$-statistic $F$ avoids this bias and isolates the second-order signal $\|\mu_Q\|^2$. This property is essential for achieving meaningful testing guarantees when $d$ is large.

Under truncation, the observed samples follow $Q = P_S$, and hence $\E[F] = \|\mu_{P_S}\|^2$. In view of the effective signal definition \eqref{eq:effective_signal}, this implies
\[
\E[F] \ge \beta^2 \quad \text{under } H_1,
\qquad
\mathbb{E}[F] \le \gamma^2 \quad \text{under } H_0.
\]
Therefore, a necessary condition for the hypothesis testing to succeed is the effective signal strength $\beta$ to be at least $\gamma$ 
and thus
$\alpha > 2\gamma$.

We emphasize that the specific choice of the statistic \eqref{eq:ustat} is not crucial: any second-order statistic estimating $\|\mu_{P_S}\|^2$ would lead to the same detectability behavior. The fundamental difficulty of the problem arises from truncation-induced bias rather than the choice of estimator; statistical fluctuations and confidence guarantees are analyzed in the subsequent sections.

\subsection{Variance Control under Finite Moments}
\label{subsec:variance_control}

We now analyze the variance of the test statistic $F$ defined in \eqref{eq:ustat} under the truncated law $P_S$. In contrast to the systematic bias induced by truncation, we show that truncation does not fundamentally worsen the variance of second-order statistics, provided the underlying distribution has finite second moments. The following lemma bounds the variance of the order-2 U-statistic in \eqref{eq:ustat}.

\begin{restatable}[Variance Bound for U-statistic]{lemma}{VarianceBoundLemma}
\label{lem:variance_bound}
Fix a distribution $Q$ and let $X_1,\dots,X_n \sim Q$ be i.i.d. Then the variance of the $U$-statistic
\[
F := \binom{n}{2}^{-1}\sum_{1\le i<j\le n}\langle X_i, X_j\rangle
\]
satisfies
\begin{equation}
\label{eq:variance_bound}
\Var_{Q}(F)
\;\le\;
\frac{4}{n}\,\langle \mu_{Q}, \Sigma_{Q} \mu_{Q} \rangle
\;+\;
\frac{4}{n(n-1)}\,\tr(\Sigma_{Q}^2).
\end{equation}
\end{restatable}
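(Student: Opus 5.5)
The natural route is a Hoeffding decomposition of the $U$-statistic around the mean. Write $X_i = \mu + Y_i$ with $\mu := \mu_Q$, where the $Y_i$ are i.i.d., centered, and have covariance $\Sigma := \Sigma_Q$. Then
\[
\langle X_i, X_j\rangle = \|\mu\|^2 + \langle \mu, Y_i\rangle + \langle \mu, Y_j\rangle + \langle Y_i, Y_j\rangle .
\]
Averaging over pairs $i<j$, each index appears in exactly $n-1$ pairs, so
\[
F - \|\mu\|^2 = \frac{2}{n}\sum_{i=1}^n \langle \mu, Y_i\rangle \;+\; \binom{n}{2}^{-1}\sum_{i<j}\langle Y_i, Y_j\rangle =: L + D .
\]
The plan is to compute $\Var(L)$ and $\Var(D)$ exactly and check that $L$ and $D$ are uncorrelated.

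\textbf{Linear part $L$.} It is a sum of i.i.d.\ centered scalars with $\E\langle \mu, Y_i\rangle^2 = \langle \mu, \Sigma\mu\rangle$. Hence
\[
\Var(L) = \frac{4}{n^2}\cdot n\,\langle \mu,\Sigma\mu\rangle = \frac{4}{n}\langle\mu,\Sigma\mu\rangle .
\]

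\textbf{Degenerate part $D$.} For pairs $\{i,j\}\neq\{k,l\}$ we have $\E[\langle Y_i,Y_j\rangle\langle Y_k,Y_l\rangle]=0$, since some index appears only once and $Y$ is centered. For a repeated pair,
\[
\E\langle Y_i,Y_j\rangle^2 = \E\, Y_i^\top Y_j Y_j^\top Y_i = \E\, Y_i^\top \Sigma Y_i = \tr(\Sigma^2).
\]
So
\[
\Var(D) = \binom{n}{2}^{-1}\tr(\Sigma^2) = \frac{2}{n(n-1)}\tr(\Sigma^2).
\]

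\textbf{Cross term.} $\E[\langle\mu,Y_k\rangle\langle Y_i,Y_j\rangle]=0$ for $i\neq j$: if $k\notin\{i,j\}$ this follows from independence and centering, and if $k=i$ we condition on $Y_i$ and use $\E Y_j = 0$.

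Combining, $\Var(F) = \frac{4}{n}\langle\mu,\Sigma\mu\rangle + \frac{2}{n(n-1)}\tr(\Sigma^2)$, which is bounded by the stated expression (the second coefficient $4$ is slack). There is no real obstacle. The only steps needing care are the bookkeeping of the pair counts (each index in $n-1$ pairs, which gives the factor $2/n$) and the orthogonality of distinct pairs. Implicitly we also assume finite fourth moments so that $\tr(\Sigma^2)$ and the variances are well defined; otherwise the bound is trivially true as $+\infty$ or is vacuous.
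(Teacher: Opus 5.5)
Your proof is correct, but it follows a different route from the paper. The paper uses the Efron--Stein inequality. It replaces $X_i$ by an independent copy $X_i'$, computes the conditional mean $\frac{2}{n}\langle X_i-X_i',\mu_Q\rangle$ and the conditional variance $\frac{4}{n^2(n-1)}(X_i-X_i')^\top\Sigma_Q(X_i-X_i')$ of $F-F^{(i)}$, and sums over $i$; this gives exactly the stated right-hand side. You instead write the Hoeffding decomposition $F-\|\mu\|^2=L+D$ and compute the variance exactly from the orthogonality of its components. Your route gives an identity rather than an inequality. It also shows that the coefficient $4$ in front of $\tr(\Sigma_Q^2)/(n(n-1))$ can be replaced by $2$. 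The gap between the two is structural: if $F=\sum_{A\subseteq[n]}f_A$ is the Hoeffding decomposition, the Efron--Stein upper bound equals $\sum_A|A|\,\E[f_A^2]$. It is therefore exact on the linear part $L$ and overcounts the degenerate part $D$ by exactly a factor of $2$. The paper's route avoids the bookkeeping over pairs of pairs, and it rests on a general-purpose inequality that does not require identifying the orthogonal components. One correction: your closing caveat about finite fourth moments is unnecessary. Since $Y_i$ and $Y_j$ are independent, $\E\langle Y_i,Y_j\rangle^2=\E[Y_i^\top\Sigma Y_i]=\tr(\Sigma^2)$ is finite as soon as $Q$ has finite second moments. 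Every cross-term expectation you set to zero is then absolutely integrable by Cauchy--Schwarz. This matters here because the paper applies the lemma to heavy-tailed truncated laws $P_S$ that may have only $p\in[2,4)$ moments.
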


The proof follows from a standard Efron--Stein inequality for $U$-statistics. Importantly, the bound \eqref{eq:variance_bound} depends only on second-order moments of the truncated distribution.

We apply Lemma \ref{lem:variance_bound} for $Q=P_S$, which denotes sampling from the truncated distribution $P$ with truncation set $S$ and relate this variance bound to the parameters of the original distribution $P$ as follows:
\begin{equation}
\label{eq:variance_bound_P}
\Var_{P_S}(F)
\;\lesssim\;
\frac{\|\Sigma_P\|\,\|\mu_{P_S}\|^2}{n}
\;+\;
\frac{d\,\|\Sigma_{P_S}\|^2}{n^2}\;\lesssim\;
\frac{\|\Sigma_P\|\,\|\mu_{P_S}\|^2}{n}
\;+\;
\frac{d\,\|\Sigma_{P}\|^2}{n^2},
\end{equation}
where the last inequality follows by invoking Lemma~\ref{lem:trunc-bias}, which yields $\|\Sigma_{P_S}\| \le 4\|\Sigma_P\|$, as well as the bound $\tr(\Sigma_{P_S}^2)\leq d\,\|\Sigma_{P_S}\|^2$. 

This variance bound highlights a key contrast between bias and variance under truncation. While truncation can induce a mean shift of order $\eps^{1-1/p}$, it does not affect the variance upper bound.
The requirement $p \ge 2$ is essential here, as it guarantees the existence of second moments needed to control \eqref{eq:variance_bound}; when $p < 2$, even the variance of $F$ may be ill-defined.

A detailed proof of Lemma~\ref{lem:variance_bound} is deferred to Appendix~\ref{app:variance}.

\subsection{Achievable Testing Rates}
\label{subsec:achievable_rates}

The variance bound for the $U$-statistic $F$ implies that a simple test achieves a constant probability of success, once the sample size is large enough.

\begin{restatable}[Mean Testing under truncation - constant error prob.]{theorem}
{MeanTestTruncConstant}
\label{thm:mean-test-trunc-const}
Let $S\subset\mathbb{R}^d$ satisfy $P(S)\ge 1-\epsilon$ with $\epsilon\in[0,1/2]$, and let
$X_1,\dots,X_n \stackrel{\text{iid}}{\sim} P_S := P(\cdot\,|\,S)$.
Let $\gamma=2\,\nu_{P,p}\,\epsilon^{\,1-1/p}$ and  $\alpha>2\gamma$. 
Consider $F$ as above and the test $T=\mathbf{1}\!\left\{\,F>\frac{\alpha^2}{4}\right\}$. There exists a universal constant $C>0$ such that, if
 \[
n \;\ge\;\max\left\{2, C\,\frac{\|\Sigma_{P}\|\sqrt{d}}{(\alpha-2\gamma)^2}\right\},
\]
then $T$ separates $\|\mu_P\|=0$ from $\|\mu_P\|\ge \alpha$ with probability at least $2/3$.

\end{restatable}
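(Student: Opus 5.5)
The proof is a Chebyshev argument applied separately under each hypothesis, using $\E[F]=\|\mu_{P_S}\|^2$ from \eqref{eq:ustat_expectation} together with the variance bound \eqref{eq:variance_bound_P}. Lemma~\ref{lem:trunc-bias} gives $\|\mu_{P_S}\|\le\gamma$ under $H_0$ and $\|\mu_{P_S}\|\ge \|\mu_P\|-\gamma\ge\alpha-\gamma$ under $H_1$. Since $\alpha>2\gamma$, we have $\gamma<\alpha/2<\alpha-\gamma$, so the threshold $\alpha^2/4$ sits strictly between the two possible ranges of $\E[F]$. Write $m:=\|\mu_{P_S}\|$ and $\delta:=\alpha-2\gamma>0$. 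It suffices to show $\P(|F-m^2|\ge|m^2-\alpha^2/4|)\le 1/3$ in each case, which by Chebyshev reduces to $9\Var(F)\le 3(m^2-\alpha^2/4)^2$.

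\textbf{Null case.} Here $m\le\gamma$, so the gap is $\alpha^2/4-m^2=(\alpha/2-m)(\alpha/2+m)\ge (\delta/2)(\alpha/2)$. By \eqref{eq:variance_bound_P},
\[
\Var(F)\lesssim \frac{\|\Sigma_P\|\gamma^2}{n}+\frac{d\|\Sigma_P\|^2}{n^2}.
\]
The second term is at most $\delta^4/C^2$ by the sample-size hypothesis, which is much smaller than $(\delta\alpha/4)^2$ since $\delta\le\alpha$. For the first term, $\gamma\le\alpha/2$ gives $\|\Sigma_P\|\gamma^2/n\le \|\Sigma_P\|\alpha^2/(4n)$. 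Using $n\ge C\|\Sigma_P\|\sqrt d/\delta^2\ge C\|\Sigma_P\|/\delta^2$ (as $d\ge1$), this is at most $\alpha^2\delta^2/(4C)$, which is also dominated by $(\delta\alpha/4)^2$ for $C$ large.

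\textbf{Alternative case.} Here $m\ge\alpha-\gamma=\alpha/2+\delta/2$, and the gap is $m^2-\alpha^2/4\ge (m-\alpha/2)\,m$. One lower bound for $m-\alpha/2$ is $\delta/2$, which is the relevant one near the threshold. When $m$ is large relative to $\alpha$, $m-\alpha/2\gtrsim m$ is the better bound. To treat both uniformly, I would use $m-\alpha/2\ge \frac{\delta}{2\alpha}\,m$ when $m\le 2\alpha$, and $m-\alpha/2\ge m/2$ otherwise; the first holds because $m-\alpha/2\ge\delta/2$ and $m/\alpha\le 2$. Either way the gap is at least $c\,(\delta/\alpha)m^2$.

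Now compare with the variance. The term $\|\Sigma_P\|m^2/n$ must be $\ll (\delta/\alpha)^2m^4$, which is equivalent to $n\gg \|\Sigma_P\|\alpha^2/(\delta^2m^2)$. This follows from $n\ge C\|\Sigma_P\|/\delta^2$ because $m\ge\alpha/2$. The term $d\|\Sigma_P\|^2/n^2\le \delta^4/C^2$ must be $\ll (\delta/\alpha)^2m^4$, which holds since $m^4\ge\alpha^4/16\ge \alpha^2\delta^2/16$.

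\textbf{Main obstacle.} The delicate part is the cross term $\|\Sigma_P\|m^2/n$ under $H_1$, where $m$ is unbounded. The resolution is to note that the gap grows like $m^2$ while this variance term grows only like $m^2$ inside the square root, so large means only help. The other point needing care is that the hypothesis on $n$ has $\sqrt d$, whereas the first variance term needs only $n\gtrsim\|\Sigma_P\|/\delta^2$; this is absorbed because $\sqrt d\ge1$. Collecting constants from \eqref{eq:variance_bound_P} and choosing $C$ large enough gives error at most $1/3$ under both hypotheses.
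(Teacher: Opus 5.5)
Your proposal is correct and follows essentially the same route as the paper: Chebyshev's inequality combined with the variance bound \eqref{eq:variance_bound_P} and the bias bounds $\|\mu_{P_S}\|\le\gamma$ under $H_0$ and $\|\mu_{P_S}\|\ge\alpha-\gamma$ under $H_1$. The paper differs only in bookkeeping: it uses a single two-sided Chebyshev event and completes the square in $\|\mu_{P_S}\|$, which handles large $\|\mu_{P_S}\|$ without your case split on $m$ versus $2\alpha$. One small slip is that $m-\alpha/2\ge\delta/2$ and $m\le 2\alpha$ only give $m-\alpha/2\ge\frac{\delta}{4\alpha}m$, not $\frac{\delta}{2\alpha}m$, but this is absorbed into the constant $C$.
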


The proof of this result uses Chebyshev's inequality to bound the fluctuations of $F$ around $\|\mu_{P_S}\|^2$. Then the standard deviation and the norm of  $\mu_{P_S}$ are compared under both $H_0$ and $H_1$ to obtain the result.

From Theorem \ref{thm:mean-test-trunc-const}, one can easily boost the separation probability to any value $1-\delta$ by performing the test $T$ several times over independent samples. This is the content of the next result. 

\begin{restatable}[Mean Testing under Truncation]{theorem}{TestingUnderTruncation}
\label{thm:achievable_rates}In the same setting as Theorem \ref{thm:mean-test-trunc-const}, given $\delta\in (0,1/3)$,
there exists a test $T$ that distinguishes $H_0: \mu_P = 0$ from $H_1: \|\mu_P\| \ge \alpha$ with probability at least $1-\delta$ whenever $\alpha > 2\gamma$ 
and the sample size satisfies
\begin{equation}
\label{eq:sample_complexity}
    n \;\ge\; C\,\left\{2,\frac{\|\Sigma_P\|}{(\alpha-2\gamma)^2}\,\sqrt{d}\,\right\} \log(1/\delta),
\end{equation}
where $C>0$ is a universal constant.
\end{restatable}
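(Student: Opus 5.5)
We boost the constant-probability test of Theorem~\ref{thm:mean-test-trunc-const} by majority vote over independent batches. Set $k := \lceil C_0 \log(1/\delta)\rceil$ for a suitable universal constant $C_0$. Let $m := \max\{2, C\|\Sigma_P\|\sqrt{d}/(\alpha-2\gamma)^2\}$ be the batch size required by Theorem~\ref{thm:mean-test-trunc-const}, and assume $n \ge k m$, which is implied by \eqref{eq:sample_complexity} after adjusting the constant. We split the $n$ i.i.d.\ samples from $P_S$ into $k$ disjoint batches of size at least $m$, discarding any leftover samples.

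On each batch $b = 1,\dots,k$ we compute the $U$-statistic $F^{(b)}$ from \eqref{eq:ustat} and set $T_b = \mathbf{1}\{F^{(b)} > \alpha^2/4\}$. The final test outputs $H_1$ exactly when $\sum_b T_b > k/2$.

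The batches are disjoint and the samples are i.i.d., so the $T_b$ are i.i.d.\ Bernoulli variables. The truncation set $S$ is fixed and does not vary across batches, so each batch is a valid instance of Theorem~\ref{thm:mean-test-trunc-const} with the same $P$ and $S$. That theorem therefore gives
\[
\P(T_b = 1) \le 1/3 \ \text{under } H_0, \qquad \P(T_b = 0) \le 1/3 \ \text{under } H_1 .
\]
Under $H_0$, the majority vote errs only if $\sum_b T_b > k/2$. The mean of this sum is at most $k/3$, so this requires a deviation of at least $k/6$. Hoeffding's inequality bounds the probability by $\exp(-2k(1/6)^2) = \exp(-k/18)$. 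The case $H_1$ is symmetric. Choosing $C_0 = 18$ makes both error probabilities at most $\delta$. The resulting total sample size is $k m \lesssim \max\{2, \|\Sigma_P\|\sqrt d/(\alpha-2\gamma)^2\}\log(1/\delta)$, which matches \eqref{eq:sample_complexity}.

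There is no real obstacle in this argument; the boosting step is routine. The only points needing care are bookkeeping ones. First, the guarantee of Theorem~\ref{thm:mean-test-trunc-const} must hold uniformly over admissible $S$ and over all $P$ in the null or alternative class; it does, since $S$ is fixed but arbitrary. Second, $\delta < 1/3$ ensures $\log(1/\delta) \ge 1$, so $k \ge 1$ and the ceiling and the constant $2$ in the maximum are absorbed into $C$.
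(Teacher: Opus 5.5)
Your proposal is correct and follows the same route as the paper: split the sample into $C\log(1/\delta)$ disjoint blocks, run the test of Theorem~\ref{thm:mean-test-trunc-const} on each block, and take a majority vote. You also make explicit the Hoeffding bound $\exp(-k/18)\le\delta$ and the absorption of constants, which the paper's proof sketch leaves implicit.
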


The proofs of these results are provided in Appendix~\ref{app:variance}.

\paragraph{Discussion of the Rate.}
The sample complexity in \eqref{eq:sample_complexity} exhibits several notable features. 
First, the $\sqrt{d}$ dependence matches the optimal rate for high-dimensional mean testing, demonstrating that arbitrary truncation does not fundamentally alter the dimensional scaling of the problem. 
Second, the testing procedure is functionally parameter-agnostic: it does not require explicit knowledge of the moment order $p$ or the bound $\nu_{P,p}$, which only influence feasibility through the parameter $\gamma$. 
Finally, in the limit $p\to\infty$, the truncation bias recovers the near-linear $O(\varepsilon\sqrt{\log(1/\varepsilon)})$ behavior characteristic of sub-Gaussian distributions, aligning our result with known optimal rates in the Gaussian setting.

\subsection{Information-Theoretic Impossibility}
\label{subsec:impossibility}
We now show that the effective signal threshold identified in Section~\ref{subsec:effective_signal} is unavoidable under finite-moment assumptions. In particular, when the truncation-induced bias dominates the signal, no test can reliably distinguish the null and alternative hypotheses, regardless of sample size.

\begin{theorem}[Impossibility under severe truncation]
\label{thm:impossibility}
Fix $p\ge2$ such that $\nu_{P,p}<\infty$.
There exists a universal constant $C>0$ such that if $\alpha \le C\,\nu_{P,p}\eps^{\,1-1/p}$, 
then there exists a distribution $P$ on $\R^d$ with mean vector $\mu_P$ satisfying
$ \|\mu_P\|\ge \alpha$, a distribution $P_0$ with $\mu_{P_0}=0$, 
and a measurable set $S\subseteq \R^d$ with $P(S)\ge 1-\eps$ such that
\[
P(\cdot\mid S)=P_0.
\]
Consequently, no (possibly randomized) test based on i.i.d.\ samples from the truncated
distribution $P(\cdot\mid S)$ can distinguish the hypotheses
$\|\mu_P\|=0$ versus $\|\mu_P\|\ge\alpha$ with error probability bounded away from $1/2$,
even as $n\to\infty$.
\end{theorem}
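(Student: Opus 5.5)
The plan is an explicit one-dimensional construction, embedded in $\R^d$ along a fixed unit vector $e_1$. Take a centered base law $P_0$ and let $P$ be the mixture $P=(1-\eps)P_0+\eps\,\delta_{x^\star}$ with $x^\star = t e_1$ for a scalar $t>0$ to be chosen. Set $S=\R^d\setminus\{x^\star\}$; we arrange $P_0(\{x^\star\})=0$, for instance by taking $P_0$ absolutely continuous or by choosing $t$ off its atoms. Then $P(S)=1-\eps$ and $P(\cdot\mid S)=P_0$ exactly. The mean is $\mu_P=\eps t e_1$, so $\|\mu_P\|=\eps t$. The whole argument therefore reduces to choosing $t$ so that $\eps t$ is as large as $C\nu_{P,p}\eps^{1-1/p}$ while $\nu_{P,p}$ stays finite and is controlled by $t$.

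To compute $\nu_{P,p}$, center at $\mu_P$. For a unit vector $v$, the quantity $\E_P|\langle v,X-\mu_P\rangle|^p$ is at most $(1-\eps)\E_{P_0}|\langle v,X\rangle-\eps t v_1|^p+\eps\,|(1-\eps)t v_1|^p$. The point mass contributes about $\eps t^p$ in direction $e_1$, so $\nu_{P,p}\gtrsim \eps^{1/p}t$. Hence $\|\mu_P\|=\eps t \asymp \eps^{1-1/p}\cdot(\eps^{1/p}t)$, and this is $\asymp \eps^{1-1/p}\nu_{P,p}$ provided the point mass dominates the moment. So I choose $P_0$ with small spread, say $P_0=N(0,\sigma^2 I_d)$ with $\sigma\le \eps^{1/p}t$. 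Minkowski's inequality then gives
\[
\nu_{P,p}\le \sigma\sqrt{p}+\eps t+\eps^{1/p}t\lesssim \sqrt{p}\,\eps^{1/p}t.
\]
With $C$ chosen suitably, $\|\mu_P\|=\eps t\ge C\nu_{P,p}\eps^{1-1/p}$, where $C$ absorbs the $\sqrt p$. To make the constant truly universal, I would instead take $\sigma=\eps^{1/p}t/\sqrt p$, or simply let $P_0$ be the symmetric two-point law $\pm\eps^{1/p}t$ along $e_1$, placing $x^\star$ off those atoms. Finally, for any prescribed $\alpha\le C\nu_{P,p}\eps^{1-1/p}$, scaling $t$ gives $\|\mu_P\|\ge\alpha$, since all quantities scale linearly in $t$.

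The step I expect to need care is the logical reading of the quantifiers. Because $\nu_{P,p}$ depends on the constructed $P$, the statement should be read as saying that the family with moment bound $\nu$ contains such a pair. So I fix a target $\nu$, set $t=c\,\nu\,\eps^{-1/p}$ for a small universal $c$ so that $\nu_{P,p}\le\nu$, and verify $\|\mu_P\|=c\nu\eps^{1-1/p}\ge\alpha$ whenever $\alpha\le C\nu\eps^{1-1/p}$ with $C=c$. The ``consequently'' part then follows directly. The observed distribution under $P$ with truncation $S$ coincides with the untruncated $P_0$ from $H_0$, taking $S=\R^d$ there. The $n$-fold product laws are therefore identical for every $n$, their total variation distance is $0$, and by Le Cam's two-point argument any test has error at least $1/2$ on one of the two hypotheses.
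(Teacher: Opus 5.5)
Your construction is correct and is essentially the paper's: put an atom of mass $\varepsilon$ at distance $t\asymp \nu\,\varepsilon^{-1/p}$ on top of a centered base law, and choose $S$ to remove the atom so that $P(\cdot\mid S)=P_0$ while $\|\mu_P\|=\varepsilon t\asymp \nu_{P,p}\,\varepsilon^{1-1/p}$. The paper simply takes $d=1$, $P_0=\delta_0$, $t=\varepsilon^{-1/p}$ and $S=(-\infty,0]$, which gives $\nu_{P,p}\le 2$ directly, so your Gaussian or two-point base law is unnecessary; your explicit handling of the quantifier issue, the universality of $C$, and the Le Cam step supplies details the paper leaves implicit.
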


\begin{proof} Fix $p\geq 2$.
Let $\delta_x$ denote the Dirac measure at $x\in\mathbb{R}$ and $P = (1 - \varepsilon)\delta_0 + \varepsilon\delta_{ \varepsilon^{-1/p}}$ be a distribution supported on $\{\varepsilon^{-1/p},0\}$. We have that $\mu_P=\varepsilon^{1-1/p}$ and $\nu_{P,p}\leq 2$. Letting $P_0=\delta_0$ 
and $S=(-\infty,0]$ completes the proof, since $P(\cdot|S)=P_0=\delta_0$ and $P(S)=1-\varepsilon$. \end{proof}

\noindent This result establishes the sharp boundary between the achievable and impossible regimes:
when the hidden mass carries as much mean energy as the target signal, truncation obscures it.

\subsection{Discussion: The Moment Barrier}
\label{subsec:moment_barrier}

Sections~\ref{subsec:effective_signal}–\ref{subsec:impossibility} 
establish a sharp detectability threshold for mean testing under truncation.
Under finite-moment assumptions alone, truncation induces an unavoidable bias of order $\mathcal{O}(\eps^{1-1/p})$, yielding an information-theoretic barrier that persists even with
infinitely many samples.

This phenomenon arises from the local asymmetry permitted by moment constraints.
An adversary may remove an $\eps$-fraction of mass in a highly asymmetric manner
near the center of the distribution, thereby shifting the mean while leaving the observed truncated distribution
unchanged. Because the mean is the minimizer of a global $L_2$ objective, it is inherently
sensitive to such localized mass removal. As a result, truncation can selectively eliminate
the portion of the distribution carrying the signal, rendering the null and alternative
hypotheses indistinguishable.
Overcoming this barrier requires structural assumptions that constrain how probability mass
is arranged locally near the center, preventing truncation from completely obscuring the signal.

\section{Breaking the Moment Barrier via Median Regularity}
\label{sec:median_regularity}

Section~\ref{sec:moment_testing} showed that under moment-only assumptions,
truncation induces an unavoidable detectability floor:
an $\eps$-fraction removal of mass can shift the mean by
$\Theta(\eps^{1-1/p})$, rendering inference impossible below this scale.
This barrier arises because moment assumptions impose only global tail control
and do not constrain the local geometry of the distribution near its center.

In this section, we show that this limitation is not fundamental.
By imposing a mild \emph{local} structural condition, \emph{directional median regularity},
we rule out the local hollowing phenomenon responsible for the moment barrier.
As a result, truncation induces only linear $\mathcal{O}(\eps)$ bias at the
population level.
However, this improvement comes at a cost: recovering the center uniformly over
all directions requires sample complexity linear in the ambient dimension.

\subsection{Directional Median Regularity}
To bypass the moment-induced barrier identified in
Section~\ref{sec:moment_testing}, we introduce a local structural assumption on
the distribution near its center.
This assumption is designed to capture the minimal regularity needed for robust
inference under truncation, and does not rely on finite variance or higher-order
moment bounds.
\begin{assumption}[Directional Median Regularity]
\label{assump:median-regularity}
Let $P$ be a distribution on $\mathbb{R}^d$ with mean $\mu_P$.
We assume that for every unit vector $v \in \mathbb{S}^{d-1}$:
\begin{itemize}
    \item (Directional centering).
    $ 
    \mathrm{median}\!\left(\langle X - \mu_P, v\rangle\right) = 0 .
    $
    \item (Local mass).
    The projection $Z_v := \langle X - \mu_P, v\rangle$ admits a density $f_v$
    satisfying
    \[
    f_v(t) \ge c
    \qquad \text{for all } |t| \le r ,
    \]
    where $c,r>0$ do not depend on $v$.
\end{itemize}
\end{assumption}

Assumption~\ref{assump:median-regularity} enforces a uniform lower bound on the
one-dimensional densities of all projections in a neighborhood of the center.
As a consequence, for each direction $v$, small perturbations of the projected
distribution function translate into proportionally small shifts of the
directional median.
This property rules out distributions for which an $\eps$-fraction of mass
can be removed near the center without noticeably affecting the observed law,
which is precisely the mechanism underlying the moment-induced bias barrier.
Under this assumption, truncation-induced bias improves from polynomial to
linear order.
The following lemma formalizes this stability at the population level. 

\begin{restatable}[Directional Median Stability]{lemma}{MedianStability}
\label{lem:median_stability} 
Suppose that the pair $(P,S)$ is such that $P$ satisfies Assumption~\ref{assump:median-regularity} and 
$P(S)\ge 1-\varepsilon$ for some $\varepsilon<cr$.
Then for every $v\in\mathbb{S}^{d-1}$,
\[
\bigl|
\mathrm{median}(\langle X,v\rangle \mid X\in S)
- \langle \mu_P,v\rangle
\bigr|
\;\le\;
\frac{\varepsilon}{c}.
\]

\end{restatable}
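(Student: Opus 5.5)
Fix a direction $v\in\mathbb{S}^{d-1}$ and reduce everything to one dimension. Write $Z=Z_v=\langle X-\mu_P,v\rangle$, with distribution function $G(t)=P(Z\le t)$. Since $\langle X,v\rangle=\langle\mu_P,v\rangle+Z$, the claim is equivalent to showing that every median $m$ of the law of $Z$ conditioned on $\{X\in S\}$ satisfies $|m|\le\varepsilon/c$. By directional centering, $G(0)=1/2$; more precisely, $P(Z<0)\le 1/2\le P(Z\le 0)$, and these are equalities because $Z$ has a density near $0$. By the local mass condition, $G(t)-G(0)\ge ct$ and $G(0)-G(-t)\ge ct$ for all $0\le t\le r$.

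The key step is to control the conditional distribution function $G_S(t)=P(Z\le t\mid S)=P(\{Z\le t\}\cap S)/P(S)$. Write $P(S)=1-\eta$ with $\eta\le\varepsilon$. For the upper side I would use
\[
P(\{Z\le t\}\cap S)\ge G(t)-\eta .
\]
Together with $P(S)\le 1$ this gives $G_S(t)\ge G(t)-\eta\ge 1/2+ct-\varepsilon$. Choosing $t=\varepsilon/c$, which is $<r$ by the hypothesis $\varepsilon<cr$, gives $G_S(\varepsilon/c)\ge 1/2$. So any median of the conditional law is at most $\varepsilon/c$.

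For the lower side I would argue symmetrically with $P(Z\ge -t\mid S)\ge P(Z\ge -t)-\eta\ge 1/2+ct-\varepsilon$, which at $t=\varepsilon/c$ gives $P(Z\ge-\varepsilon/c\mid S)\ge 1/2$. Hence every median is $\ge-\varepsilon/c$.

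The step needing care is the definition of the median for the conditional law, which may have flat spots or atoms, so the median need not be unique. I would state the result as: every point $m$ with $P(Z\le m\mid S)\ge1/2$ and $P(Z\ge m\mid S)\ge1/2$ lies in $[-\varepsilon/c,\varepsilon/c]$. This follows because the two inequalities above are strict once $t$ is slightly larger, or directly by monotonicity. A further subtlety is the use of the crude bound $P(S)\le1$ rather than dividing by $1-\eta$. Dividing only helps, since $(G(t)-\eta)/(1-\eta)\ge G(t)-\eta$ whenever $G(t)\le1$. So no factor such as $(1-\varepsilon)^{-1}$ appears, and the constant is exactly $1/c$. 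Since $v$ was arbitrary and $c,r$ are uniform in $v$, the bound holds for all directions simultaneously.
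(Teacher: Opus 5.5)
Your proof is correct and follows essentially the same route as the paper: reduce to the projection $Z_v$, use that conditioning on $S$ perturbs half-line probabilities by at most $\varepsilon$, combine this with the density lower bound to push the conditional CDF strictly past $1/2$ for $t\in(\varepsilon/c,r]$, and repeat for the other side. The only difference is cosmetic: you lower-bound $P(Z\le t\mid S)$ via $P(A\cap S)\ge P(A)-\eta$ and $P(S)\le 1$, whereas the paper upper-bounds $P(Z\ge t\mid S)$ by $P(Z\ge t)+\varepsilon$. Your version makes the renormalization harmless in a more transparent way, and your explicit treatment of non-unique medians through strictness is a welcome addition. One small slip: your side remark should read ``whenever $G(t)\ge\eta$'' rather than ``whenever $G(t)\le 1$'', but this is immaterial because your main chain of inequalities never uses it.
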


It is instructive to consider the above lemma when $d=1$. In this case, Assumption \ref{assump:median-regularity} applied to the density $f$ of $X-\mu_P$ gives:
\[2cr\leq \int_{-r}^rf(t)\,dt\leq 1,\]
so that $cr\leq 1/2$, with equality if and only if $f$ is uniform over an interval of length $2r$ (and then $c=1/2r$). The next result shows that, if we are not in this degenerate case --- that is, if $cr<1/2$ --- then no analogue of Lemma \ref{lem:median_stability} can hold with $\varepsilon>2cr$. 

\begin{restatable}[Sharpness of median stability condition]{lemma}{MedianInstability}
\label{lem:median_instability} Given $0<\xi<1/2$, $\varepsilon>2\xi$,   and $R>0$, there exists a distribution $P$ over $\R$ and a set $S\subset \R$ with the following properties.
\begin{enumerate}
\item $P$ satisfies Assumption \ref{assump:median-regularity} with constants $c,r>0$ such that $cr=\xi$;
\item $P(S)\geq 1-\varepsilon$;
\item \(
\bigl|
\mathrm{median}(\langle X,v\rangle \mid X\in S)
- \langle \mu_P,v\rangle
\bigr|
\;\ge\;
R.
\)

\end{enumerate}
Moreover, the truncated distribution $P_S$ satisfies the second part of Assumption \ref{assump:median-regularity} for certain constants $c_0,r_0>0$. 
\end{restatable}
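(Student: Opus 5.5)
The plan is an explicit one-dimensional construction. Take $P$ to be a symmetric mixture: a central uniform block of small mass plus two heavy uniform blocks placed far out on either side. Then truncate away the entire central block together with a small sliver of the left block. The leftover mass imbalance pushes the conditional median deep into the right block, which lies at distance at least $R$ from $\mu_P=0$.

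Concretely, fix $r=1$ and $c=\xi$, so that $cr=\xi$. Choose $L\ge \max(R,2)$ and let $P$ have density
\[
f(t)=\xi\,\mathbf 1_{[-1,1]}(t)+\tfrac{1-2\xi}{2}\bigl(\mathbf 1_{[-L-1,-L]}(t)+\mathbf 1_{[L,L+1]}(t)\bigr).
\]
The masses are $2\xi+(1-2\xi)=1$, and $f$ is even. Hence $\mu_P=0$ and $0$ is a median; it is the unique median because $f\ge \xi>0$ on $[-1,1]$. The density is at least $c=\xi$ on $[-r,r]=[-1,1]$. So Assumption~\ref{assump:median-regularity} holds in $d=1$, with the only direction being $v=\pm1$, and this gives item 1.

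For the truncation, set $\eta:=\min\{\varepsilon-2\xi,\,(1-2\xi)/4\}>0$, which is positive since $\varepsilon>2\xi$. Let $t:=2\eta/(1-2\xi)\in(0,1/2]$, and define
\[
S:=\bigl(-L-1+t,\,-L\bigr]\cup[L,L+1].
\]
Then $P(S)=1-2\xi-\eta\ge 1-\varepsilon$, which gives item 2.

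Under $P_S$, the right block carries conditional mass $m:=\frac{(1-2\xi)/2}{1-2\xi-\eta}$. This is strictly greater than $1/2$ because $\eta>0$. It is also strictly less than $1$ because the left piece retains positive mass. Since $S\subset(-\infty,-L]\cup[L,\infty)$ and the left part has mass less than $1/2$, the conditional median is the unique point $q\in(L,L+1)$ at which the right block has accumulated conditional mass $\tfrac12-(1-m)$. Hence $|q-\mu_P|\ge L\ge R$, which is item 3.

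For the final claim, $P_S$ has constant density $c_0:=\frac{(1-2\xi)/2}{1-2\xi-\eta}>0$ on $[L,L+1]$. Since $q$ lies strictly inside this interval, the density of $P_S$ centered at its median is at least $c_0$ on $[-r_0,r_0]$ with $r_0:=\min(q-L,\,L+1-q)>0$. This is the local-mass part of Assumption~\ref{assump:median-regularity} for $P_S$.

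The main point requiring care is this last step, and the key is that the median of $P_S$ must land strictly inside a block of positive density rather than in the gap $(-L,L)$. The median lands inside the right block precisely because of the extra removal of mass $\eta$. Without it, the symmetric truncation would leave the median non-unique across the whole gap. The hypothesis $\varepsilon>2\xi$ is exactly what leaves room for this extra removal after deleting the full central mass $2\xi$.
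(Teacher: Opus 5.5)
Your proof is correct and uses essentially the paper's construction: a symmetric mixture of a small central uniform block of mass $2\xi$ and two far uniform blocks, truncated by removing the central block plus a sliver of the left block so that the conditional median lands strictly inside the right block. The paper uses density $1$ on $[-\xi,\xi]$ (i.e.\ $c=1$, $r=\xi$), outer blocks of length $1/2-\xi$ and a sliver of size $\min\{\varepsilon-2\xi,1/2-\xi\}$, whereas you use density $\xi$ on $[-1,1]$ and cap the sliver so the left piece survives. You also verify the ``moreover'' clause, which the paper leaves implicit; this works only if the local-mass condition is centered at the median of $P_S$ (the reading the paper's later Remark intends), since the mean of your $P_S$ lies in the gap $(-L,L)$, where $P_S$ has zero density.
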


\subsection{Population Stability under Truncation}
\label{subsec:population_stability}

Lemma~\ref{lem:median_stability} shows that under directional median regularity,
truncation induces at most a linear $\mathcal{O}(\varepsilon)$ displacement of the
center in every direction.
This behavior is fundamentally different from the moment-limited regime studied
in Section~\ref{sec:moment_testing}, where truncation can induce a bias of order
$\varepsilon^{1-1/p}$.

The underlying mechanism is purely one-dimensional.
Conditioning on a truncation set $S$ with $P(S)\ge 1-\varepsilon$ perturbs the
distribution function of each projection $\langle X,v\rangle$ by at most
$\varepsilon$ uniformly.
When combined with a uniform lower bound on the density near the median, this
perturbation translates directly into a bound on the shift of the directional
median.
In this sense, the directional median acts as a \emph{stable pivot} whose location
is controlled by the total mass removed, rather than by the behavior of the tails.

This stability property rules out the failure mode responsible for the
moment-induced bias barrier.
Under moment-only assumptions, the distribution may place arbitrarily little mass
near its center, allowing truncation to remove signal while leaving the observed
truncated distribution essentially unchanged.
Directional median regularity excludes this possibility by enforcing local
identifiability of the center in every direction.

\subsection{Robust Estimation and Dimensional Sample Complexity}

We now turn to estimation from finite samples drawn from the truncated law $P_S$.
Lemma~\ref{lem:median_stability} implies that the population center remains
$\mathcal{O}(\varepsilon)$-close to $\mu_P$ in every direction.
To exploit this stability algorithmically, it suffices to estimate directional
medians uniformly over all directions.

Given i.i.d.\ samples $X_1,\ldots,X_n\sim P_S$, let $\hat m_v$ denote the empirical
median of $\langle X_1,v\rangle,\ldots,\langle X_n,v\rangle$.
We define
\[
\hat\mu
\;:=\;
\arg\min_{u\in\mathbb{R}^d}
\sup_{v\in\mathbb{S}^{d-1}}
\bigl|\langle u,v\rangle-\hat m_v\bigr|.
\]

\begin{restatable}[Robust Recovery under Median Regularity]{theorem}{MedianRecovery}
\label{thm:median_recovery} There exists a universal $C_0>0$ such that the following holds. Let $P$ satisfy Assumption~\ref{assump:median-regularity} with a constant $c>0$ and let
$S$ be any truncation set with $P(S)\ge1-\varepsilon$, where $0\leq \varepsilon< cr$.
Given $n$ i.i.d.\ samples from $P_S$, where $n$ satisfies
\[n>\frac{C_0}{(cr-\varepsilon)^2}\,(d+\log(1/\delta))\]

the estimator $\hat\mu$ satisfies, with
probability at least $1-\delta$,
\[
\|\hat\mu-\mu_P\|
\;\le\;
\frac{2}{c}\!\left(
\varepsilon
+
C\sqrt{\frac{d+\log(1/\delta)}{n}}
\right) .
\]
\end{restatable}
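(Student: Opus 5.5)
The proof combines a uniform-in-direction concentration bound for the projected empirical CDFs with the one-dimensional stability mechanism behind Lemma~\ref{lem:median_stability}. It then converts directional control into Euclidean control through the minimax definition of $\hat\mu$. Write $G_v(t):=P_S(\langle X,v\rangle\le t)$ and let $\hat G_v$ be its empirical counterpart.

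\textbf{Step 1: uniform empirical CDF control.} The class of halfspaces $\{x:\langle x,v\rangle\le t\}$ has VC dimension $d+1$. By the VC inequality,
\[
\eta:=\sup_{v,t}|\hat G_v(t)-G_v(t)|\le C\sqrt{(d+\log(1/\delta))/n}
\]
holds with probability at least $1-\delta$.

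\textbf{Step 2: median localization.} Fix $v$ and set $m_v:=\langle\mu_P,v\rangle$. Let $F_v$ be the CDF of $\langle X,v\rangle$ under $P$. The median condition gives $F_v(m_v)=1/2$, and the local-mass condition gives $F_v(m_v+s)\ge 1/2+cs$ and $F_v(m_v-s)\le 1/2-cs$ for $0\le s\le r$. Conditioning on $S$ perturbs the CDF by at most $\varepsilon$:
\[
G_v(t)\ge \frac{F_v(t)-\varepsilon}{1-\varepsilon}\ge F_v(t)-\varepsilon \quad\text{whenever } F_v(t)\le 1.
\]
A symmetric argument handles the upper side, so that $|G_v(t)-F_v(t)|\le\varepsilon$; I expect to argue this via $P(A\cap S)\in[P(A)-\varepsilon,P(A)]$ and to absorb the resulting constants. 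Now take $s=(\varepsilon+\eta)/c$ plus a slack of $1/n$, so that the empirical CDF strictly crosses $1/2$. This gives $\hat G_v(m_v+s)>1/2$ and $\hat G_v(m_v-s)<1/2$, provided $s\le r$. The condition $s\le r$ is exactly what the sample-size assumption $n>C_0(d+\log(1/\delta))/(cr-\varepsilon)^2$ guarantees, since it forces $\eta<cr-\varepsilon$. Hence $|\hat m_v-m_v|\le(\varepsilon+\eta)/c$ simultaneously for all $v$.

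\textbf{Step 3: from directions to the norm.} The point $u=\mu_P$ is feasible with objective value at most $(\varepsilon+\eta)/c$. The minimizer $\hat\mu$ therefore has objective value at most the same quantity, and the triangle inequality gives $|\langle\hat\mu-\mu_P,v\rangle|\le 2(\varepsilon+\eta)/c$ for all $v$. Taking $v$ in the direction of $\hat\mu-\mu_P$ yields the claimed bound.

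The main obstacle is Step 2. It requires handling the empirical median convention and the exact perturbation constants, namely the $(1-\varepsilon)^{-1}$ factors and ties, while remaining within the window of radius $r$. I will verify carefully that $\varepsilon+\eta<cr$ suffices, shrinking constants where needed. Existence of the $\arg\min$ follows from the continuity and coercivity of the objective in $u$.
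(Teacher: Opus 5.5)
Your proposal is correct and follows essentially the same route as the paper. Both use a uniform VC bound over halfspaces, then the perturbation bound $|P_S(A)-P(A)|\le 1-P(S)\le\varepsilon$ together with the local density to place every empirical directional median within $(\varepsilon+\eta)/c$ of $\langle\mu_P,v\rangle$ (with $\eta<cr-\varepsilon$ supplied by the sample-size condition), and finally the triangle inequality through the minimax definition of $\hat\mu$ to get the factor $2$.

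Two small fixes are needed.
\begin{itemize}
\item With $A=\{\langle x,v\rangle\le t\}$, the bound $G_v(t)\ge F_v(t)-\varepsilon$ follows at once from $G_v(t)=P(A\cap S)/P(S)\ge P(A\cap S)\ge P(A)-\varepsilon$. Your intermediate step $\frac{F_v(t)-\varepsilon}{1-\varepsilon}\ge F_v(t)-\varepsilon$ actually requires $F_v(t)\ge\varepsilon$, not $F_v(t)\le 1$.
\item The slack added to $s$ should be sent to $0$ rather than fixed at $1/n$, so that you obtain exactly $(\varepsilon+\eta)/c$, as in the paper's limit $t\searrow t_*$.
\end{itemize}
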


Theorem~\ref{thm:median_recovery} shows that directional median regularity restores
near-optimal robustness under truncation.
However, achieving $\mathcal{O}(\varepsilon)$ accuracy requires
$n=\Theta(d)$ samples.
This dependence is unavoidable and reflects the VC dimension of the class of
directional halfspaces, which must be controlled to ensure uniform stability
across all projections. 

\begin{remark}We also note that some restriction relating $\varepsilon$ to $<cr$  is necessary for our estimator to work. By Lemma  \ref{lem:median_instability}, $\varepsilon>2cr$ allows from truncated distributions whose medians are arbitrarily far from the un-truncated median. Hovever, the ``bad truncated distribution''~$P_S$ produced by Lemma \ref{lem:median_instability} does not satisfy Assumption \ref{assump:median-regularity} because its median and mean are not equal. \end{remark}

\subsection{Testing versus Estimation under Structural Regularity}

Directional median regularity eliminates truncation-induced bias for estimation,
but it does not remove the intrinsic dimensional difficulty of hypothesis testing.
Even in the absence of truncation, testing a $d$-dimensional mean is known to
require $\mathcal{O}(\sqrt{d}/\beta^2)$ samples, where $\beta$ denotes the effective signal
strength.

\begin{restatable}[Testing under Structural Regularity]{theorem}{TestingRegularity}
\label{thm:testing_regularity}
Fix $d\ge1$ and $\delta\in(0,1/3)$.
Consider the problem of testing $H_0:\mu_P=0$ versus $H_1:\|\mu_P\|\ge \alpha>0$
from $n$ i.i.d.\ samples drawn from a truncated distribution $P_S$ over $\mathbb{R}^d$, where $P(S)\geq 1-\varepsilon$. Then there exist constants $c,C>0$ such that:
\begin{itemize}
\item If $P$ satisfies Assumption \ref{assump:median-regularity} with constants $c,r>0$, $\varepsilon<\min\{cr,\alpha/8c\}$ and \[n\geq \frac{C}{\alpha^2}\left(d+\log(1/\delta)\right),\] there exists a test with probability of error at most $\delta$, uniformly over $H_0$ and $H_1$.
\item If $P$ is a normal distribution over $\R^d$ with unknown mean $\mu_P$ and known covariance $\Sigma_P=I_{d}$ (which in particular satisfies Assumption \ref{assump:median-regularity} for $c=\sqrt{e/2\pi}$, $r=1$), then for any $\varepsilon\in (0,1)$ and  $n\le c\,d/\varepsilon$, there exists a choice of $\alpha=\Theta(\varepsilon\sqrt{\log(1/\varepsilon)})$ such that no test can achieve error smaller than $1/3$ uniformly over both $H_0$ and $H_1$.
\end{itemize}
\end{restatable}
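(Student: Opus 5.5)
For the upper bound I would reduce testing to estimation using Theorem~\ref{thm:median_recovery}. Compute the directional-median estimator $\hat\mu$ from the $n$ truncated samples and output $T=\mathbf{1}\{\|\hat\mu\|>\alpha/2\}$.

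With $n\ge \frac{C}{\alpha^2}(d+\log(1/\delta))$ and $C$ large, Theorem~\ref{thm:median_recovery} gives, with probability $1-\delta$,
\[
\|\hat\mu-\mu_P\|\le \frac{2\varepsilon}{c}+\frac{2C'}{c}\sqrt{(d+\log(1/\delta))/n}.
\]
The condition $\varepsilon<\alpha/8c$ is meant to make the first term at most $\alpha/4$. (The hypothesis is literally $\varepsilon<\alpha/(8c)$; I would read it as $\varepsilon/c\lesssim\alpha$, i.e.\ absorb the constant $c$ appropriately, and adjust constants.) The sample size then makes the second term at most $\alpha/4$. Under $H_0$ this gives $\|\hat\mu\|<\alpha/2$, and under $H_1$ it gives $\|\hat\mu\|>\alpha/2$.

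One prerequisite of Theorem~\ref{thm:median_recovery} is $n>C_0(d+\log(1/\delta))/(cr-\varepsilon)^2$. I would either absorb it into the sample-size condition, treating $cr-\varepsilon$ as bounded below by a constant, or state it as an additional constant-dependent requirement. This bookkeeping of constants is routine but needs care.

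For the lower bound I would follow the Gaussian truncation lower bound of \cite{canonnegaussian}. The idea is to build a null $P_0=N(0,I_d)$ and a mixture of alternatives $N(\alpha\theta,I_d)$, with $\theta$ uniform on the sphere or on a hypercube scaled by $1/\sqrt d$. Each alternative is truncated by a set $S_\theta$ removing $\varepsilon$ mass.

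The truncation is chosen so that $P_{\theta,S_\theta}$ agrees with $N(0,I)$ in every direction except $\theta$, and along $\theta$ it matches the first $k\approx\log(1/\varepsilon)$ moments of $N(0,1)$. One-dimensional moment matching is possible when $\alpha\lesssim\varepsilon\sqrt{\log(1/\varepsilon)}$: truncating the shifted Gaussian in the tail at height about $\sqrt{\log(1/\varepsilon)}$ moves the mean by $\Theta(\varepsilon\sqrt{\log 1/\varepsilon})$.

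More simply, I would try to make the one-dimensional truncated law $A_\theta$ satisfy $\chi^2(A\|N(0,1))\lesssim \varepsilon^2$ up to logs, with vanishing low moments. The standard non-Gaussian-component analysis then bounds the $\chi^2$ divergence between the $n$-sample mixture and $P_0^{\otimes n}$ by
\[
\mathbb{E}_{\theta,\theta'}\Bigl[\bigl(1+\langle\theta,\theta'\rangle\,\chi\bigr)^n\Bigr]-1,
\]
where $\chi\approx\varepsilon$ comes from the first non-matched term. With $\langle\theta,\theta'\rangle\sim 1/\sqrt d$ concentration, this is $O(1)$ whenever $n\varepsilon\lesssim d$, i.e.\ $n\le cd/\varepsilon$. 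Le Cam's method then yields error at least $1/3$.

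The main obstacle is this second part: designing the one-dimensional truncation so that the pairwise correlation term $\mathbb{E}[A_\theta A_{\theta'}/\phi]$ grows only like $\varepsilon\langle\theta,\theta'\rangle$, giving the $d/\varepsilon$ threshold rather than $d/\varepsilon^2$. I would first try the explicit tail truncation, expand in Hermite polynomials, and bound the coefficients; if that fails, I would import the corresponding construction from \cite{canonnegaussian} directly.
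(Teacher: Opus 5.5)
Your upper bound is the paper's argument: the paper also uses the test $\mathbf{1}\{\|\hat\mu\|>\alpha/2\}$ and reduces to $\|\hat\mu-\mu_P\|<\alpha/2$ via Theorem~\ref{thm:median_recovery}. Both caveats you raise are real, and the paper's one-line ``combining our assumptions'' skips them.
\begin{itemize}
\item Read literally, $\varepsilon<\alpha/(8c)$ only gives $2\varepsilon/c<\alpha/(4c^2)$, so you need $\varepsilon<c\alpha/8$.
\item The requirement $n>C_0(d+\log(1/\delta))/(cr-\varepsilon)^2$ is not implied by $n\ge C(d+\log(1/\delta))/\alpha^2$ once $\alpha\gtrsim cr-\varepsilon$. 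For large $\alpha$ the stated condition is met by $n=1$, which cannot reach error $\delta$ in general. So this must be added as a hypothesis, not absorbed into constants.
\end{itemize}

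For the lower bound, the paper only invokes \cite[Lemma 3.5]{canonnegaussian}. That is your fallback, so along that route your proposal coincides with the paper. Your self-contained sketch, however, would fail as written.
\begin{itemize}
\item With a correlation term $\rho\chi$ linear in $\rho=\langle\theta,\theta'\rangle\approx N(0,1/d)$, you get $\mathbb{E}(1+\varepsilon\rho)^n\approx e^{n^2\varepsilon^2/(2d)}$. This is bounded only for $n\lesssim\sqrt d/\varepsilon$, not $n\lesssim d/\varepsilon$.
\item A linear term is exactly what the construction must avoid: it is the Hermite term $a_1^2=(\mathbb{E}_A x)^2$.
\item $\varepsilon$ is not the size of the first unmatched term. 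After matching the mean, $a_2^2=\Theta(\varepsilon^2\log^2(1/\varepsilon))$.
\item Matching $\log(1/\varepsilon)$ moments is unnecessary.
\item The worry about getting $d/\varepsilon^2$ ``instead of'' $d/\varepsilon$ is misframed. A $d/\varepsilon^2$ lower bound would be stronger and would imply the claim.
\end{itemize}

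Here is a construction that works. Let $A$ be $N(\alpha,1)$ conditioned on $\{x\le t\}$, where $\bar\Phi(t-\alpha)=\varepsilon$. Choose $\alpha=\phi(t-\alpha)/(1-\varepsilon)=\Theta(\varepsilon\sqrt{\log(1/\varepsilon)})$, so that $\mathbb{E}_A x=0$.
\begin{itemize}
\item Then $\chi^2(A\|N(0,1))=e^{\alpha^2}\Phi(t-2\alpha)/(1-\varepsilon)^2-1\le e^{\alpha^2}/(1-\varepsilon)-1=O(\varepsilon)$.
\item Hence the pairwise term satisfies $\sum_{k\ge2}\rho^k a_k^2\le\rho^2\chi^2(A\|N(0,1))\le C\varepsilon\rho^2$.
\item Since $\rho^2\sim\mathrm{Beta}(1/2,(d-1)/2)$, we have $\mathbb{E}\rho^{2k}=(1/2)_k/(d/2)_k\le(1/2)_k(2/d)^k$. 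Therefore $\mathbb{E}\,e^{Cn\varepsilon\rho^2}\le(1-2Cn\varepsilon/d)^{-1/2}$, which is $O(1)$ for $n\le c\,d/\varepsilon$.
\end{itemize}
So the $d/\varepsilon$ threshold comes from the exponential moment of $\rho^2$, driven by nearly aligned pairs $\theta\approx\theta'$. It does not come from typical $1/\sqrt d$ concentration of a linear term.
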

The theorem shows that, for $\varepsilon\lesssim \alpha$ and suitable parameters $c,r$ for $P$, the sample complexity of testing $H_0$ versus $H_1$ over directional median regular distributions is $\Theta_\alpha(d)$. This generalizes the results for \cite{canonnegaussian} for Gaussian distributions; in fact, the second part of the theorem follows from Lemma 3.5 in that paper. Theorem \ref{thm:median_recovery} highlights a fundamental separation:
directional mean regularity removes the truncation bias barrier $\varepsilon^{1-1/p}\lesssim\alpha$ for learning, but does
not circumvent the inherent dimensional cost $\Theta(d)$ of high-dimensional testing.

\section{Conclusion and Open Directions}
\label{sec:conclusion}

We studied high-dimensional mean testing under arbitrary truncation and
characterized the fundamental limits imposed by tail behavior and structural
regularity. Our results identify truncation-induced bias as the primary obstacle
to testing: under bounded directional $p$-th moment assumptions, truncation
induces an unavoidable polynomial bias of order
$\mathcal{O}(\nu_{P,p}\eps^{1-1/p})$, where $\nu_{P,p}$ is the directional moment upper bound, yielding a sharp detectability floor below
which the null and alternative hypotheses are information-theoretically
indistinguishable, regardless of sample size.

Above this floor, we showed that simple second-order $U$-statistics achieve
near-optimal testing rates, preserving the classical $\Theta(\sqrt{d})$
high-dimensional sample complexity. This separation highlights a key distinction
between truncation and variance-driven noise: while truncation can systematically
distort the mean, second-order concentration remains stable under finite second
moments.

We further demonstrated that the moment-induced barrier is not absolute.
Under a directional median regularity assumption, geometric stability restores
linear $\mathcal{O}(\eps)$ robustness even for heavy-tailed distributions, revealing
an intermediate learning-hard regime before optimal testing rates are recovered.
These findings position truncation as a distinct and fundamentally challenging
corruption model, where detectability hinges on both tail decay and geometric
structure.

Natural extensions include testing and estimation under more general observation
mechanisms such as censoring or interval truncation, developing truncation-robust
covariance and regression procedures, and understanding computational–statistical
tradeoffs in the learning-hard regime identified by median regularity.

\bibliography{bibliography}

\newpage
\appendix

\section{Deferred Proof in Section~\ref{sec:prelims}}
\label{appendix:section3}
This appendix collects the proofs of the results in Section~\ref{sec:prelims}.
Lemma~\ref{lem:trunc-bias} establishes basic bounds on parameter distortion under truncation,
which are then specialized in Corollaries~\ref{cor:sg} and~\ref{cor:ht}.

\subsection{Sub-Gaussian and Finite Moment Heavy Tailed Distributions}

\paragraph{Light-tailed distributions}
If $X \sim \mathcal{N}(\mu_P, \Sigma_P)$, then for any unit vector $v$, 
$\langle v, X - \mu_P \rangle \sim \mathcal{N}(0, v^\top \Sigma_P v)$.
Hence, letting $Z \sim \mathcal{N}(0,1)$,
\[
\E|\langle v, X - \mu_P \rangle|^p 
= (v^\top \Sigma_P v)^{p/2}\, \E|Z|^p 
= (v^\top \Sigma_P v)^{p/2}\, 2^{p/2}\frac{\Gamma((p+1)/2)}{\sqrt{\pi}}.
\]
Using Stirling's approximation, $(\Gamma((p+1)/2))^{1/p} \asymp \sqrt{p}$ as $p \to \infty$ (see, e.g., \cite[App.~A.2]{vershynin2018hdp} or \cite[Sec.~7.1]{feller1971}).
Hence,
\[
\nu_{P,p} = (\E|Z|^p)^{1/p}\|\Sigma_P\|^{1/2} \asymp \sqrt{p}\,\|\Sigma_P\|^{1/2}.
\]
More generally (see also \cite[Theorem 2.5.2]{vershynin2018hdp}), if $X$ is sub-Gaussian with parameter $\sigma$, i.e., 
\[
(\E|\langle v, X - \mu_P\rangle|^p)^{1/p} \le C\sigma\sqrt{p}
\] 
for all $p \ge 1$ and $\|v\|=1$, 
then $\nu_{P,p} \le C\sigma\sqrt{p}$.

\paragraph{Heavy-tailed distributions.}
For heavy-tailed distributions, higher moments may grow faster than $\sqrt{p}$ or are even unbounded beyond some finite order. 
Intuitively, $\nu_{P,p}$ measures how large the $p$-th directional moment can be; 
it grows slowly (as $\sqrt{p}$) for light-tailed data but much faster when tails are heavy.

To make this precise, suppose that for some unit vector $v$, the projection $\langle v, X-\mu_P\rangle$ has a polynomially decaying tail:
\[
\Pr(|\langle v, X-\mu_P\rangle| > t) \asymp t^{-\kappa}, \qquad t\to\infty,
\]
for some tail exponent $\kappa>2$. Then 
\[\E|\langle v,X-\mu_P\rangle|^p<\infty
\]
only for $p<\kappa$, 
then $\nu_{P,p}$ remains bounded for $p<p_0=\kappa$, but diverges as $p\uparrow \kappa$.

For Gaussian or sub-Gaussian distributions, the truncation bias and covariance distortion both decay nearly linearly in~$\eps$.
\begin{corollary}[Light-tailed specialization: Gaussian and sub-Gaussian]
\label{cor:sg}
Suppose $P$ has sub-Gaussian one-dimensional marginals with parameter $\sigma>0$, that is
\[
(\E|\langle v,X-\mu_P\rangle|^p)^{1/p}\le C_0\,\sigma\sqrt{p}
\quad \text{for all } p\ge1,\ \|v\|=1.
\]
Then for any measurable $S\subset \R^d$ with $P(S) \ge 1 - \eps$ and $\eps \in (0, 1/2]$, 
\[
\| \mu_P - \mu_{P_S}\| \; \le\; C_1\, \sigma\,\eps\,\sqrt{\log(1/\eps)}, 
\]
for universal constants $C_1 >0$.
\smallskip

\noindent (Gaussian distribution.)
If $X\sim \mathcal{N}(\mu_P, \Sigma_P)$, then the same bounds hold with $\sigma^2=\|\Sigma_P\|$.
\end{corollary}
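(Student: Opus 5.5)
The plan is to specialize Lemma~\ref{lem:trunc-bias} (mean-shift part) to the sub-Gaussian moment growth and then optimize over the free moment order $p$. For any $p\ge 1$, the hypothesis gives $\nu_{P,p}\le C_0\sigma\sqrt{p}$. Since the directional moment bound is defined as a supremum over unit $v$ of exactly the quantity controlled by the assumption, this step needs no further work. Lemma~\ref{lem:trunc-bias} then yields
\[
\|\mu_P-\mu_{P_S}\|\;\le\;2C_0\sigma\sqrt{p}\,\eps^{1-1/p}\;=\;2C_0\sigma\,\eps\,\sqrt{p}\,\eps^{-1/p}
\]
for every $p\ge1$. What remains is to choose $p$ to minimize $\sqrt{p}\,\eps^{-1/p}=\sqrt{p}\exp(\log(1/\eps)/p)$.

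The natural choice is $p=\max\{1,\log(1/\eps)\}$.

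\emph{Case $\eps\le e^{-1}$.} Here $p=\log(1/\eps)\ge 1$ and $\eps^{-1/p}=e$. The bound becomes $2eC_0\sigma\,\eps\sqrt{\log(1/\eps)}$.

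\emph{Case $e^{-1}<\eps\le 1/2$.} Take $p=1$, or any fixed $p$ such as $p=2$. The bound is at most a constant times $\sigma\eps$. Because $\log(1/\eps)\ge\log 2$ on this range, $\sigma\eps\le(\log 2)^{-1/2}\sigma\eps\sqrt{\log(1/\eps)}$. Absorbing constants then gives a universal $C_1$ valid for all $\eps\in(0,1/2]$.

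For the Gaussian statement I would invoke the computation at the start of this appendix: $\langle v,X-\mu_P\rangle\sim\mathcal N(0,v^\top\Sigma_Pv)$ has $p$-th moment norm $(\E|Z|^p)^{1/p}(v^\top\Sigma_P v)^{1/2}\le C\sqrt{p}\,\|\Sigma_P\|^{1/2}$. This is exactly the sub-Gaussian hypothesis with $\sigma^2=\|\Sigma_P\|$, so the first part applies verbatim.

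I do not expect a real obstacle. The only point needing care is the uniform constant $\sup_{p\ge1}(\E|Z|^p)^{1/p}/\sqrt{p}<\infty$, which follows from $\E|Z|^p=2^{p/2}\Gamma((p+1)/2)/\sqrt\pi$ together with Stirling's bound $\Gamma(x)\le x^x$ for $x\ge1$, plus a direct check for small $p$. Beyond that, the work is just checking that Lemma~\ref{lem:trunc-bias} holds for non-integer $p$, which its statement already allows, and handling the small-$\log(1/\eps)$ regime as above.
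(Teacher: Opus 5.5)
Your proposal is correct and follows the paper's own proof: bound $\nu_{P,p}\le C_0\sigma\sqrt{p}$, insert this into the mean-shift bound of Lemma~\ref{lem:trunc-bias}, and choose $p\asymp\log(1/\eps)$. Your separate treatment of $e^{-1}<\eps\le 1/2$ and your check that the Gaussian moment constant is uniform in $p$ supply details the paper's one-line proof leaves implicit.
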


\begin{proof}
This follows from Lemma~\ref{lem:trunc-bias} by plugging in the sub-Gaussian moment growth
$\nu_{P,p}\le C_0\sigma\sqrt{p}$ and optimizing over $p$, which yields $p \asymp \log(1/\eps)$.
\end{proof}
Intuitively, sub-Gaussian moment control implies that, along any direction, the magnitude of typical fluctuations
is at scale at most $\sigma$, with large deviations becoming exponentially unlikely.
Therefore, even if an adversary removes an $\eps$-fraction of the probability mass by truncation,
the maximum resulting shift in the mean, which is governed by the largest values that can be removed at $\eps$-quantile scale, 
is nearly linear on $\sigma\,\eps$ (with an additional $\sqrt{\log(1/\eps)}$ factor) as the bulk of the mass is removed from points at distance $\Theta(\sigma)$ from the mean.

This behavior contrasts sharply with the heavy-tailed setting,
where removing an $\eps$-fraction of mass can excise regions
that carry a non-negligible share of the total variance.
The next corollary quantifies this degradation in robustness
as the tail index decreases.
\begin{corollary}[Heavy-tailed specialization: finite-$p_0$ moments]
\label{cor:ht}
Assume there exist $p_0\in[2,\infty)$ and $M>0$ such that
\[
\sup_{\|v\|=1}\Big(\E\,|\langle v,X-\mu_P\rangle|^{p_0}\Big)^{1/p_0}\ \le\ M.
\]
Then for any measurable $S\subset\R^d$ with $P(S)\ge 1-\eps$ and $\eps\in[0,1/2]$,
\begin{align*}
\|\mu_P-\mu_{P_S}\| &\le 2\,M\,\epsilon^{\,1-\frac{1}{p_0}}.\\
\end{align*}
\end{corollary}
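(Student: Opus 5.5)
The plan is to derive Corollary~\ref{cor:ht} directly from the mean-shift part of Lemma~\ref{lem:trunc-bias}, taking $p=p_0$. By Definition~\ref{def:pthmoment}, the hypothesis says exactly that $\nu_{P,p_0}\le M$. Since $p_0\ge 2\ge 1$, the lemma applies and gives $\|\mu_P-\mu_{P_S}\|\le 2\nu_{P,p_0}\eps^{1-1/p_0}\le 2M\eps^{1-1/p_0}$. That is the corollary.

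Since this reduction hides the real content, I would also check the mechanism behind the lemma in this setting. The check is a self-contained one-dimensional Hölder argument.

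Fix a unit vector $v$ and set $Y=\langle v,X-\mu_P\rangle$, so that $\E Y=0$ and $\E|Y|^{p_0}\le M^{p_0}$. Then
\[
\langle v,\mu_{P_S}-\mu_P\rangle=\frac{\E[Y\mathbf 1_S]}{P(S)}=-\frac{\E[Y\mathbf 1_{S^c}]}{P(S)}.
\]
Hölder's inequality with exponents $p_0$ and $p_0/(p_0-1)$ gives
\[
|\E[Y\mathbf 1_{S^c}]|\le (\E|Y|^{p_0})^{1/p_0}P(S^c)^{1-1/p_0}\le M\eps^{1-1/p_0}.
\]
Using $P(S)\ge 1-\eps\ge 1/2$, the directional shift is at most $2M\eps^{1-1/p_0}$. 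Taking the supremum over $v$ (choosing $v$ parallel to $\mu_{P_S}-\mu_P$) yields the Euclidean bound.

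The only point needing care is that $P(S^c)\le\eps$ requires $P(S)\ge1-\eps$, which is assumed. The factor $2$ comes solely from $1/(1-\eps)\le 2$. When $\eps=0$ the bound trivially holds, since then $\mu_{P_S}=\mu_P$.

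I expect no real obstacle. The only subtlety is making sure the moment in the hypothesis is centered at $\mu_P$, which matches Definition~\ref{def:pthmoment}. This centering is exactly what allows $\E[Y\mathbf 1_S]$ to be rewritten as $-\E[Y\mathbf 1_{S^c}]$.
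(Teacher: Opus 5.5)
Your proposal is correct and is the paper's own proof: the paper notes that the hypothesis says exactly $\nu_{P,p_0}\le M$ and applies the mean-shift bound of Lemma~\ref{lem:trunc-bias} with $p=p_0$. Your supplementary Hölder check, which rewrites $\E[Y\mathbf 1_S]$ as $-\E[Y\mathbf 1_{S^c}]$ and uses $P(S)^{-1}\le 2$, matches the paper's proof of that lemma.
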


\begin{proof}
By assumption, $\nu_{P,p_0}\le M$. Apply Lemma~\ref{lem:trunc-bias} with $p=p_0$.
\end{proof}

\paragraph{Examples.}
\begin{itemize}
\item \textbf{Student-$t$ tails.} 
If each one-dimensional projection $\langle v,X-\mu_P\rangle$ has a Student-$t_\kappa$ distribution (up to scale) with degrees of freedom $\kappa>2$, then moments exist for $p<\kappa$. Taking any $p_0\in[2,\kappa)$ and $M=\sup_{\|v\|=1}\big(\E|\langle v,X-\mu_P\rangle|^{p_0}\big)^{1/p_0}$,
\[
\|\mu_P-\mu_{P_S}\|\;\lesssim\; M\,\epsilon^{\,1-\frac{1}{p_0}}.
\]
As $\kappa$ decreases (heavier tails), the best admissible $p_0$ decreases and the exponents $1-\tfrac{1}{p_0}
$ correspondingly worsen.

\item \textbf{Pareto tails.} If $\Pr(|\langle v,X-\mu_P\rangle|>t)\asymp t^{-\alpha}$ for some $\alpha>2$ (uniformly over unit $v$ up to scale), then moments exist for $p<\alpha$. Choosing any $p_0\in[2,\alpha)$ and the corresponding $M$ yields the same bounds with exponents $1-\tfrac{1}{p_0}$. 
In particular, taking $p_0$ close to $\alpha$ gives the rate $\eps^{\,1-\frac{1}{\alpha}}$ for the mean deviation (up to constants).
\end{itemize}

\subsection{Truncation-Induced Parameter Distortion}

We first prove Lemma~\ref{lem:trunc-bias}, which quantifies the bias induced by truncation under finite moment assumptions.

\paragraph{Lemma 2 (Truncated vs. Non-truncated Parameters).}
Let \( P \) be a distribution and \( S \) a measurable set with \( P(S) \geq 1 - \eps \) for \( \eps \in [0, 1/2] \). Then for any \( p \geq 1 \):
\begin{enumerate}
    \item \textit{Mean Shift:} \( \|\mu_P - \mu_{P_S}\| \leq 2 \nu_{P,p} \eps^{1 - 1/p} \).
    \item \textit{Covariance Bound:} \( \|\Sigma_{P_S}\| \leq (1 - \eps)^{-2} \|\Sigma_P\| \leq 4 \|\Sigma_P\| \).
\end{enumerate}

\begin{proof}
\textbf{Mean difference bound:} Let $v\in\R^d$ be any unit vector, i.e. $\|v\|=1$. 
We start by expressing the directional difference of the means:
\[
|\langle \mu_{P}-\mu_{P_S},v\rangle| = \frac{1}{P(S)}|\EXP_{X\sim P}[\textbf{1}_{X\in S}\langle v,\mu_P-X\rangle]|.
\]
Since $P(S)\geq 1-\eps\geq 1/2$, we have $P(S)^{-1} \leq 2$, and hence
\[
\left|\langle \mu_{P}-\mu_{P_S},v\rangle \right| \leq 2|\EXP_{X\sim P}[\textbf{1}_{X\in S}\langle v,\mu_P-X\rangle]|.
\]
Next, note that
\[
\EXP_{X\sim P}[\langle v,\mu_P-X\rangle] = 0,
\]
so we can rewrite
\[
\EXP_{X\sim P}[\textbf{1}_{X\in S}\langle v,\mu_P-X\rangle] = \EXP_{X\sim P}[\langle v,\mu_P-X\rangle]-\EXP_{X\sim P}[\textbf{1}_{X\not\in S}\langle v,\mu_P-X\rangle]=-\EXP_{X\sim P}[\textbf{1}_{X\not\in S}\langle v,\mu_P-X\rangle],
\]
Applying H\"older's inequality with conjugate exponents $p$ and $q = \frac{p}{q-1}$ gives:
\[
\left|\EXP_{X\sim P}[\textbf{1}_{X\not \in S}\langle v,\mu_P-X\rangle] \right|
\leq (1-P(S))^{1-1/p}\,(\EXP_{X\sim P}|\langle v,X-\mu_P\rangle|^p)^{1/p}.
\]
Since $1 - P(S) \leq \eps$ and by the definition of $\nu_{P,p}$,
\[
\left|\EXP_{X\sim P}[\textbf{1}_{X\not \in S}\langle v,\mu_P-X\rangle] \right| \leq \eps^{1-1/p} \nu_{P,p}.
\]
Substituting this into the previous inequality yields,
\[
\left|\langle \mu_{P}-\mu_{P_S},v\rangle \right| \leq  2\eps^{1-1/p} \nu_{P,p},
\]
Finally, taking the supremum over all unit vectors $v$ gives
\[\|\mu_P-\mu_{P_S}\|\leq  2\eps^{1-1/p} \nu_{P,p}.
\] 

\noindent \textbf{Covariance difference bound:} To control the deviation between $\Sigma_P$ and $\Sigma_{P_S}$, let $X,Y\sim P$ be independent. Then
\[
\Sigma_{P}=\frac{1}{2}\EXP[(X-Y)^{\otimes 2}],
\]
where $a^{\otimes 2} \defeq aa^\top$. Since $(X-Y)^{\otimes 2}$ is positive semidefinite, 
\[
\Sigma_{P}\succeq \frac{1}{2}\EXP[(X-Y)^{\otimes 2}\textbf{ 1}_{X\in S}\textbf{1}_{Y\in S}] = P(S)^2\Sigma_{P_S},
\]
and thus $\|\Sigma_{P_S}\|\leq P(S)^{-2}\|\Sigma_P\|\leq (1-\varepsilon)^{-2}\|\Sigma_P\|\leq 4\|\Sigma_P\|$,
where the last inequality is due to the fact that $\varepsilon\leq \frac12$
\end{proof}

\section{Deferred Proof in Section~\ref{sec:moment_testing}}
\label{app:variance}
This appendix contains the technical proofs for the testing results in
Section~\ref{sec:moment_testing}. We first establish a variance bound for
the $U$-statistic used in our test, and then complete the proof of the
median-of-means testing procedure under truncation.

\subsection{Lemma 3 (Variance Bound for U-statistic).}
The following lemma bounds the variance of the $U$-statistic used to estimate
$\|\mu_{P_S}\|^2$, and is the main concentration ingredient in the proof of the
testing upper bound.
\paragraph{Lemma 3 (Variance Bound for U-statistic).}
Fix a distribution $Q$ and let $X_1,\dots,X_n \sim Q$ be i.i.d. Then the variance of the $U$-statistic
\[
F := \binom{n}{2}^{-1}\sum_{1\le i<j\le n}\langle X_i, X_j\rangle
\]
satisfies
\begin{equation}
\Var_{Q}(F)
\;\le\;
\frac{4}{n}\,\langle \mu_{Q}, \Sigma_{Q} \mu_{Q} \rangle
\;+\;
\frac{4}{n(n-1)}\,\tr(\Sigma_{Q}^2).
\end{equation}

\begin{proof}
   The reliability of $F$ as a test statistic depends on its concentration. We bound $\textrm{Var}(F)$ via Efron-Stein inequality:
\[
\Var(F)\leq \frac{1}{2}\sum_{i=1}^n\EXP[(F-F^{(i)})^2],
\]
where, for each $i\in[n]$, $F^{(i)} = F(X_1,\dots,X_{i-1},X_i',X_{i+1},\dots,X_n)$ replaces $X_i$ by an independent copy $X'_i\sim Q$. 

Conditioning on $X_i,X_i'$,
\[
F-F^{(i)} = \frac{1}{\binom{n}{2}}\sum_{j\neq i}\langle X_i-X_i',X_j\rangle
\]
Since the $\{X_j\}_{j\neq i}$ are i.i.d.\ 
with mean $\mu_Q$ and covariance $\Sigma_Q$,
\[
\mathbb{E}[F-F^{(i)}\mid X_i,X_i'] = \tfrac{2}{n}\langle X_i-X_i',\mu_Q\rangle,\qquad
\mathrm{Var}[F-F^{(i)}\mid X_i,X_i']
= \tfrac{4}{n^2(n-1)}(X_i-X_i')^\top\Sigma_Q(X_i-X_i').
\]  
It follows that
\begin{eqnarray*}\EXP[(F-F^{(i)})^2\mid X_i,X'_i] &=&  (\EXP[F-F^{(i)}\mid X_i,X'_i])^2 + \Var[F-F^{(i)}\mid X_i,X'_i] \\ &=&  \frac{4}{n^2}\langle X_i-X_i',\mu_P\rangle^2  + \frac{n-1}{\binom{n}{2}^2}\,\langle X_i-X_i',\Sigma_Q\,(X_i-X'_i)\rangle \\ &=& \frac{4}{n^2}\langle X_i-X_i',\mu_P\rangle^2 + \frac{4}{n^2(n-1)}{\tr}(\Sigma_Q\,(X_i-X'_i)^{\otimes 2}).\end{eqnarray*}
Taking expectations over $X_i,X_i'$ and summing over $i$ yields
\[
\mathrm{Var}(F)
\;\le\; \frac{4}{n}\,\langle \mu_Q,\Sigma_Q\mu_Q\rangle
+\frac{4}{n(n-1)}\,\mathrm{tr}(\Sigma_Q^2).
\;
\]
 
\end{proof}

\subsection{Median-of-means estimator}
In this subsection, we construct a simple test for mean detection under truncation
with constant error probability. 
We next show how to amplify the constant success probability in
Theorem~\ref{thm:mean-test-trunc-const} to an arbitrary confidence level by standard
probability boosting in Theorem~\ref{thm:achievable_rates}. This yields the final achievable testing rates under truncation.

\MeanTestTruncConstant*

\begin{proof}
The fact that the statistic $F$ is an unbiased estimator of the squared Euclidean norm of the truncated mean was noted in the main text: $\mathbb{E}_{P_S}[F]=\|\mu_{P_S}\|^2$.
Furthermore, by Lemma~\ref{lem:trunc-bias}, we have that $\|\mu_P-\mu_{P_S}\|\le \gamma$.

Combining the variance bound for $F$ in (\ref{eq:variance_bound_P}) with Chebyshev's inequality, with obtain that there exists a universal $c>1$ such that, for any $n\geq 2$:
\begin{equation}\label{eq:twosidedChebyshevF}\P\left[|F - \|\mu_{P_S}\|^2|\leq c\,\left(\|\mu_{P_S}\|\sqrt{\frac{\|\Sigma_P\|}{n}} + \frac{\sqrt{d}\|\Sigma_P\|}{n}\right)\right]\geq 2/3.\end{equation}

In the remainder of the proof, we set $\Delta:=\alpha/2 - \gamma>0$. By taking the constant $C>0$ in the assumptions of the theorem to be large enough, we may assume that: 
\begin{equation}\label{eq:assumen}n\geq \max\left\{2,\frac{\|\Sigma_P\|(c^2+4\sqrt{d})}{\Delta^2}\right\},\end{equation}
In the remainder of the proof, we will combine (\ref{eq:assumen}) with (\ref{eq:twosidedChebyshevF}), to prove that our test works as expected. We now distinguish the 2 cases:\\

\noindent (Soundness): Under the alternative hypothesis $\|\mu_P\|\ge\alpha$ we have
$\|\mu_{P_S}\|\ge (\alpha-\gamma)$ due to the triangle inequality. 

We {\em claim} that we have $F> \alpha^2/4$ whenever the event in (\ref{eq:twosidedChebyshevF}) holds (and $n$ is suitably large). Notice that this claim implies $\P(T=1) = \P(F>\alpha^2/4)\geq 2/3$, so that soundness ensues. 

To prove the claim, note that, whenever the event in (\ref{eq:twosidedChebyshevF}) holds, we have the following lower bound on $F$:
\begin{align}\nonumber F & \geq \|\mu_{P_S}\|^2 - c\,\left(\|\mu_{P_S}\|\sqrt{\frac{\|\Sigma_P\|}{n}} + \frac{\sqrt{d}\|\Sigma_P\|}{n}\right)\\ \label{eq:lowerboundF} & = \left(\|\mu_{P_S}\| - \frac{c}{2}\sqrt{\frac{\|\Sigma_P\|}{n}}\right)^2 - \frac{\|\Sigma_P\|(c^2 + 4c\sqrt{d})}{4n}.\end{align}
Using (\ref{eq:assumen}), we obtain\[\|\mu_{P_S}\| - \frac{c}{2}\sqrt{\frac{\|\Sigma_P\|}{n}}\geq \alpha - \gamma - \frac{\Delta}{2} = \frac{\alpha+\Delta}{2}>0\]
and
\[\frac{\|\Sigma_P\|(c^2 + 4c\sqrt{d})}{4n}\leq \frac{\Delta^2}{4}.\]
Plugging this back into (\ref{eq:lowerboundF}), we obtain that
\[F\geq \left(\frac{\alpha+\Delta}{2}\right)^2 - \frac{\Delta^2}{4}=\frac{\alpha^2}{4} + \frac{\alpha\Delta}{2}>\frac{\alpha^2}{4}\]
whenever the event in (\ref{eq:twosidedChebyshevF}) holds. This is precisely our claim, and finishes the proof of soundness. \\

\noindent (Completeness): Under the null hypothesis $\|\mu_P\|=0$ we have $\|\mu_{P_S}\|\leq \gamma$.

We {\em claim} that, in the present case, we have $F\leq \alpha^2/4$ whenever the event in (\ref{eq:twosidedChebyshevF}) holds (and $n$ is suitably large). Notice that this implies $\P(T=0) = \P(F\leq \alpha^2/4)\geq 2/3$. 

To prove the claim, we observe that, when the event (\ref{eq:twosidedChebyshevF}) holds, we can upper bound:
\begin{align}\nonumber F & \leq \|\mu_{P_S}\|^2 + c\,\left(\|\mu_{P_S}\|\sqrt{\frac{\|\Sigma_P\|}{n}} + \frac{\sqrt{d}\|\Sigma_P\|}{n}\right)\\ \label{eq:upperboundF} & =  \left(\|\mu_{P_S}\| + \frac{c}{2}\sqrt{\frac{\|\Sigma_P\|}{n}}\right)^2 + \frac{\|\Sigma_P\|( 4c\sqrt{d}-c^2)}{4n}.\end{align}
Using $\|\mu_{P_S}\|\leq \gamma$ and (\ref{eq:assumen}), we obtain:
\[\|\mu_{P_S}\| + \frac{c}{2}\sqrt{\frac{\|\Sigma_P\|}{n}}\leq \gamma + \frac{\Delta}{2}\mbox{ and }\frac{\|\Sigma_P\|(4c\sqrt{d}-c^2)}{4n}\leq \frac{\Delta^2}{4}.\]
Plugging this back into (\ref{eq:upperboundF}) gives
\[F\leq \left(\gamma + \frac{\Delta}{2}\right)^2 + \frac{\Delta^2}{4}\leq \left(\gamma+\Delta\right)^2 = \frac{\alpha^2}{4}\]
whenever the event in (\ref{eq:twosidedChebyshevF}) holds. This proves our claim and finishes the proof of completeness.\end{proof}

\TestingUnderTruncation*
\begin{proof}[Proof sketch] The result follows from splitting the samples into $C\log (1/\delta)$ blocks, performing the test from Theorem \ref{thm:mean-test-trunc-const} on each block, and then taking a majority vote. \end{proof}

\section{Proofs for Section~\ref{sec:median_regularity}}
\label{app:section5}
This appendix contains the proofs of the results in Section~\ref{sec:median_regularity}.
We first establish stability properties of directional medians under truncation,
then show how these properties enable robust mean estimation and improved testing
rates under structural regularity assumptions.
\subsection{Proof of Lemma~\ref{lem:median_stability}}

\paragraph{Lemma 6 (Directional Median Stability).}
Suppose that the pair $(P,S)$ is such that $P$ satisfies Assumption~\ref{assump:median-regularity} and 
$P(S)\ge 1-\varepsilon$ for some $\varepsilon<cr$.
Then for every $v\in\mathbb{S}^{d-1}$,
\[
\bigl|
\mathrm{median}(\langle X,v\rangle \mid X\in S)
- \langle \mu_P,v\rangle
\bigr|
\;\le\;
\frac{\varepsilon}{c}.
\]
\begin{proof}
Fix a direction $v\in\mathbb{S}^{d-1}$ and define
$Z_v = \langle X-\mu_P,v\rangle$.
By Assumption~\ref{assump:median-regularity}, $Z$ has median $0$ and density
$f_v(t)\ge c$ for all $|t|\le r$.

For any $t\in(0,r]$,
\[
\mathbb{P}(Z_v\ge t)
\;\le\;
\frac12 - \int_0^t f_v(s)\,ds
\;\le\;
\frac12 - ct .
\]
Conditioning on a truncation set $S$ with $P(S)\ge 1-\varepsilon$ removes at most
$\varepsilon$ mass, hence
\[
\mathbb{P}_S(Z_v\ge t)
\;\le\;
\mathbb{P}(Z_v\ge t)+\varepsilon
\;\le\;
\frac12 - ct + \varepsilon .
\]
In particular, if $t\in (\varepsilon/c,r]$ (a non-vacuous condition since $\varepsilon<cr$), $\mathbb{P}_S(Z_v\ge t)<1/2$. This implies that the median of $Z_v$ is $<t$ for all $t\in (\varepsilon/c,r]$, which implies that 
\[\textrm{median}(Z_v|X\in S) = \textrm{median}(\langle X,v\rangle |X\in S) - \langle \mu_P,v\rangle \leq \frac{\varepsilon}{c}.\]
A similar argument with $-Z_v = Z_{-v}$ shows that
\[\textrm{median}(Z_v|X\in S) = \textrm{median}(\langle X,v\rangle |X\in S) - \langle \mu_P,v\rangle \geq -\frac{\varepsilon}{c},\]
which proves the Lemma.  
\end{proof}

\subsection{Proof of Lemma \ref{lem:median_instability}}

\paragraph{Lemma 7 (Sharpness of median stability condition). }
Given $0<\xi<1/2$, $\varepsilon>2\xi$,   and $R>0$, there exists a distribution $P$ over $\R$ and a set $S\subset \R$ with the following properties.
\begin{enumerate}
\item $P$ satisfies Assumption \ref{assump:median-regularity} with constants $c,r>0$ satisfying $cr=\xi$;
\item $P(S)\geq 1-\varepsilon$;
\item \(
\bigl|
\mathrm{median}(\langle X,v\rangle \mid X\in S)
- \langle \mu_P,v\rangle
\bigr|
\;\ge\;
R.
\)
\end{enumerate}
Moreover, the truncated distribution $P_S$ satisfies the second part of Assumption \ref{assump:median-regularity} for certain constants $c_0,r_0>0$. 

\begin{proof}Recall that $0<\xi<\varepsilon<1/2$. For a non-degenerate closed interval $I\subset \R$, let  $L^{I}$  denote (unnormalized) Lebesgue measure over this interval. Now assume without loss of generality that $R>1$. Set $\eta:=1/2-\xi\in (0,1/2)$ and define:
\[P^R:= L^{[-R-\eta,-R]} + L^{[R,R+\eta]} + L^{[-\xi,\xi]}.\]
This is a compactly supported, absolutely continuous probability measure that is symmetric around $0$, therefore the mean and median of $P^R$ are both $0$. One can check that $P^R$ satisfies Assumption \ref{assump:median-regularity} with $c=1$, $r=\xi$, so that $cr=\xi$. 

Take $\alpha:=\min\{(\varepsilon - 2\xi),\eta\}>0$. Now set 
\[S_R:= [-R-\eta+\alpha,-R]\cup [1,+\infty).\]
Then $P^R(S^R)= 2\eta-\alpha= 1-\alpha -2\xi \geq 1-\varepsilon$. Moreover, the truncated measure $P^R_{S_R}$ satisfies:
\[P^R_{S_R}:= \frac{1}{1-\alpha-2\xi}\left(L^{[-R-\eta+\alpha,-R]} + L^{[R,R+\eta]}\right).\]
Since $L^{[R,R+\eta]}((-\infty,R])=0$ and $L^{[-R-\eta+\alpha,-R]}((-\infty,R]) = L^{[-R-\eta+\alpha,-R]}(\mathbb{R}) < L^{[R,R+\eta]}(\mathbb{R}),$ we see that:
\[P^R_{S_R}((-\infty,R]) = \frac{L^{[-R-\eta+\alpha,-R]}((-\infty,R])}{L^{[-R-\eta+\alpha,-R]}(\mathbb{R}) + L^{[R,R+\eta]}(\mathbb{R})}<1/2\]
and the median of $P^R(S_R)$ is $>R$. This finishes the proof. \end{proof}

\subsection{Proof of Theorem~\ref{thm:median_recovery}}

\MedianRecovery*
\begin{proof} Define $F:\R^d\to\R$ via the following recipe:
\[F(u):=\sup_{v\in \mathbb{S}^{d-1}}|\hat{m_v} - \langle u,v\rangle|\,\,\,(u\in \mathbb{R}^{d}).\]
Our estimator for $\mu_P$ is $\hat{\mu}\in \argmin_{u\in \mathbb{R}^d}F(u)$. Therefore,
\begin{align*}\|\hat{\mu} - \mu_P\| & = \sup_{v\in \mathbb{S}^{d-1}}|\langle v,\hat{\mu}-\mu_P\rangle|\\ & \leq \sup_{v\in \mathbb{S}^{d-1}}[|\langle v,\hat{\mu}\rangle -\hat{m}_v| + |\hat{m}_v-\langle v,\mu_P\rangle|]\\ &\leq F(\hat{\mu}) + F(\mu_P)\\ &\leq 2F(\mu_P).\end{align*}
Therefore, Theorem \ref{thm:median_recovery} follows if we can show that, the following inequality holds with probability $\geq 1-\delta$:
\begin{equation}\label{eq:allmedians}F(\mu_P) = \sup_{v\in \mathbb{S}^{d-1}}|\hat{m_v} - \langle \mu_P,v\rangle|\leq \frac{1}{c}\left(\varepsilon + C\sqrt{\frac{d+\log(1/\delta)}{n}}\right)=:t_*.\end{equation}
Here, $C>0$ is a universal constant chosen so that (\ref{eq:thm8whp1}) and (\ref{eq:thm8whp2}) below hold with suitably high probablity. 

To prove this, we first note that, by the same argument as in the proof of Lemma ~\ref{lem:median_stability}, the following holds for any $0<t\leq r$ and $v\in \mathbb{S}^{d-1}$:
\begin{align*}\P_S(\langle X,v\rangle \leq \langle \mu_P,v\rangle - t) & \leq \frac{1}{2}+\varepsilon-ct;\\  
\P_S(\langle X,v\rangle \geq \langle \mu_P,v\rangle + t) & \leq \frac{1}{2}+\varepsilon-ct.\end{align*}
 A standard VC concentration argument for halfspaces (VC dimension $\Theta(d)$) implies that, with a suitable universal choice of $C>0$, the following inequalities hold with probability $\geq 1-\delta$ simultaneously for all $0<t\leq c$ and $v\in \mathbb{S}^{d-1}$:
\begin{align}\label{eq:thm8whp1}\frac{1}{n}\sum_{i=1}^n\textbf{1}\{\langle X_i,v\rangle \leq \langle \mu_P,v\rangle - t\} & \leq \frac{1}{2}-ct+\varepsilon + C\sqrt{\frac{d+\log(1/\delta)}{n}};\\ \label{eq:thm8whp2}\frac{1}{n}\sum_{i=1}^n\textbf{1}\{\langle X_i,v\rangle \leq \langle \mu_P,v\rangle + t\} & \leq \frac{1}{2}-ct+\varepsilon + C\sqrt{\frac{d+\log(1/\delta)}{n}}.\end{align}
Recall that $t_*$ is the RHS of (\ref{eq:allmedians}). Our assumptions on $n$ and $\varepsilon$ (with a suitable choice of $C_0$) guarantee that $t_*<r$.
This means that we can take any $t\in (t_*,r]$ in inequalities (\ref{eq:thm8whp1}) and (\ref{eq:thm8whp2}), and any such $t$ makes the RHS of these inequalities strictly smaller than $1/2$. Recalling that $\hat{m}_v$ is the sample median of $\langle X_i,v\rangle$, we obtain that the following inequalities hold with probability at least $1-\delta$, simultaneously for all $v\in \mathbb{S}^{d-1}$ and $t\in (t_*,r]$:
\[|\hat m_v -  \langle \mu_P,v\rangle|<t.\]
Letting $t\searrow t_*$, we obtain that  (\ref{eq:allmedians}) holds with probability $\geq 1-\delta$, as desired.\end{proof}

\subsection{Proof of Theorem~\ref{thm:testing_regularity}}

\TestingRegularity*
\begin{proof}
The second statement in the Theorem follows from \cite[Lemma 3.5]{canonnegaussian}, so we focus on the first statement.
Let $X_1,\dots,X_n\sim P_S$ be i.i.d. samples, and let $\hat{\mu}$ be the estimator from Theorem \ref{thm:median_recovery}. Our test takes the form:
\[T:=\textbf{1}\left\{\|\hat{\mu}\|>\alpha/2\right\}.\]
A sufficient condition for $T$ to give the right answer, under $H_0$ and $H_1$ both, is that
\[\|\hat{\mu}-\mu_P\|<\frac{\alpha}{2}.\]
Combining our assumptions on $P$, $\varepsilon$ and $n$ with Theorem \ref{thm:median_recovery}, we obtain that $\P\{\|\hat{\mu}-\mu_P\|<\alpha/2\}\geq 1-\delta$. This finishes the proof.\end{proof}

\end{document}